\newtheorem{thm}{Theorem}
\newtheorem{lem}{Lemma}
\newtheorem{ass}{Assumption}
\def\R{\mathbb{R}}
\def\E{\mathds{E}}
\def\w{{\bf{w}}}
\def\x{{\bf{x}}}
\def\I{\mathds{I}}
\def \t_kt {\widetilde{t_k}}
\def \t_k {\mathbf{t_k}}
\def \w {\mathbf{w}}
\def \v {\mathbf{v}}
\def \t {\mathbf{t}}
\def \x {\mathbf{x}}
\def \E {\mathrm{E}}
\def \x {\mathbf{x}}
\def \y {\mathbf{y}}
\def \u {\mathbf{u}}
\def \g {\mathbf{g}}
\def \F {\mathcal{F}}
\def \I {\mathbb{I}}
\def \t_kh {\widewidehat t_k}
\def \y {\mathbf{y}}
\def \E {\mathrm{E}}
\def \x {\mathbf{x}}
\def \g {\mathbf{g}}
\def \D {\mathcal{D}}
\def \u {\mathbf{u}}
\def \w {\mathbf{w}}
\def \R {\mathbb{R}}
\def \v {\mathbf{v}}
\def \I {\mathcal{I}}
\def \V {\mathcal{L}}
\def \t {\mathbf{t}}
\def \t_k {\mathcal{t_k}}
\def \F {\mathcal{F}}
\def \R {\mathbb{R}}
\def \E {\mathbb{E}}
\def \I {\mathbf{I}}
\begin{document}

\title{Stochastic Optimization of Areas Under Precision-Recall Curves with Provable Convergence}
\author{ Qi Qi $^{\dagger}\thanks{Contribute Equally. Correspondence to qi-qi@uiowa.edu, tianbao-yang@uiowa.edu }$ , Youzhi Luo$^{\ddagger}$\footnotemark[1] , Zhao Xu$^{\ddagger}$\footnotemark[1] , Shuiwang Ji$^\ddagger$, Tianbao Yang$^\dagger$ \\
$^\dagger$Department of Computer Science, The University of Iowa \\
$^\ddagger$Department of Computer Science \& Engineering, 
Texas A\&M University\\
\href{mailto:qi-qi@uiowa.edu}{\{qi-qi,tianbao-yang\}@uiowa.edu, \{yzluo,zhaoxu,sji\}@tamu.edu}
}

\maketitle

\begin{center}
First Version: April 18, 2021\footnote{We include more baselines and ablation studies suggested by peer reviewers.}    
\end{center}

\begin{abstract}
Areas under ROC (AUROC) and precision-recall curves (AUPRC) are common metrics for evaluating classification performance for imbalanced problems. Compared with AUROC, AUPRC is a more appropriate metric for highly imbalanced datasets. While stochastic optimization of AUROC has been studied extensively, principled stochastic optimization of AUPRC has been rarely explored. In this work, we propose a principled technical method to optimize AUPRC for deep learning. 
Our approach is based on maximizing the averaged precision (AP), which is an unbiased point estimator of AUPRC. 
We cast the objective into a sum of {\it coupled compositional functions} with inner functions dependent on random variables of the outer level. We propose efficient adaptive and non-adaptive  stochastic algorithms named SOAP with {\it provable convergence guarantee under mild conditions} by leveraging recent advances in stochastic compositional optimization. Extensive experimental results on  image and graph datasets demonstrate that our proposed method outperforms prior methods on imbalanced problems in terms of AUPRC.
To the best of our knowledge, our work represents the first attempt to optimize AUPRC with provable convergence. The SOAP has been implemented in the libAUC library at~\url{https://libauc.org/}. 
\end{abstract}

\section{Introduction}
Although deep learning (DL) has achieved tremendous success in various domains,  the standard DL methods have reached a plateau as the traditional objective functions in DL are no longer sufficient to model all requirements in new applications, which slows down the democratization of AI. For instance, in healthcare applications,   data is often highly imbalanced, e.g.,  patients suffering from rare diseases are  much less than those suffering from common diseases.  In these applications, accuracy (the proportion of correctly predicted examples) is deemed as an inappropriate metric for evaluating the performance of a classifier. Instead, area under the curve (AUC), including area under ROC curve (AUROC) and area under the Precision-Recall curve (AUPRC), is widely used for assessing the performance of a model. However, optimizing accuracy on  training data does not necessarily lead to a satisfactory solution to maximizing  AUC~\cite{NIPS2003_2518}.

To break the bottleneck for further advancement, DL must be empowered with the capability of efficiently handling novel objectives such as AUC. Recent studies have demonstrated great success along this direction  by maximizing AUROC~\cite{yuan2020robust}. For example, Yuan et al.~\cite{yuan2020robust} proposed a robust deep AUROC maximization method with provable convergence and achieved great success for  classification of medical image data. However,  to the best of our knowledge, novel DL by maximizing AUPRC has not yet been studied thoroughly. Previous studies~\cite{davis06,Goadrich06gleaner:creating} have found that when dealing with highly skewed datasets, Precision-Recall (PR) curves could give a more informative picture of an algorithm's performance, which entails the development of  efficient stochastic optimization algorithms for DL by maximizing AUPRC.

Compared with maximizing AUROC,  maximizing AUPRC is more challenging.  The challenges for optimization of AUPRC are two-fold. First, the analytical form of AUPRC by definition involves a complicated integral that is not readily estimated  from model predictions of training examples. In practice, AUPRC is usually computed based on some point estimators, e.g., trapezoidal estimators and interpolation estimators of empirical curves, non-parametric average precision estimator, and parametric binomial estimator~\cite{10.1007/978-3-642-40994-3_29}. Among these estimators, non-parametric average precision (AP) is an unbiased estimate in the limit and can be directly computed based on the prediction scores of samples, which lends itself  well to the task of model parameters optimization. Second, a surrogate function for AP is highly complicated and non-convex. In particular, an unbiased stochastic gradient is not readily computed, which makes existing stochastic algorithms such as SGD provide no convergence guarantee. Most existing works for maximizing AP-like function focus on how to compute an (approximate) gradient of the objective function~\cite{brown2020smooth,Cakir_2019_CVPR,Chen_2019_CVPR, 10.5555/2984093.2984129,Henderson_2017, metzler2005markov,Mohapatra2018EfficientOF,NEURIPS2020_b2eeb736,qin2008a,Rolinek_2020_CVPR}, which leave stochastic optimization of AP with provable convergence as an open question. 

\shadowbox{\begin{minipage}[t]{0.95\columnwidth}%
\it Can we design  direct stochastic optimization algorithms both in SGD-style and Adam-style  for maximizing AP with provable convergence guarantee?%
\end{minipage}}

In this paper, we propose a systematic and principled solution for addressing this question towards maximizing AUPRC for DL. By using a surrogate loss in lieu of the indicator function in the definition of AP, we cast the objective into a sum of non-convex compositional functions, which resembles a two-level stochastic compositional optimization problem studied in the literature~\cite{DBLP:journals/mp/WangFL17,DBLP:journals/jmlr/WangLF17}. However, different from existing two-level stochastic compositional functions, the inner functions in our problem are dependent on the random variable of the outer level, which requires us developing a tailored stochastic update for computing an error-controlled stochastic gradient estimator. Specifically, a key feature of the proposed method is to maintain and update two scalar quantities associated with each positive example  for estimating the stochastic gradient of the individual  precision score at the threshold specified by its  prediction score.   By leveraging recent advances in stochastic compositional optimization,  we propose both adaptive (Adam-style) and non-adaptive (SGD-style) algorithms, and establish their convergence under mild conditions.
We conduct comprehensive empirical studies on class imbalanced graph and image datasets for learning graph neural networks and deep convolutional neural networks, respectively. We demonstrate that the proposed method can consistently outperform prior  approaches in terms of AUPRC. In addition, we show that our method achieves better results when the sample distribution is highly imbalanced between classes and is insensitive to mini-batch size.

\vspace*{-0.1in}
\section{Related Work} 
\vskip -0.1in
\noindent
{\bf AUROC Optimization.} AUROC optimization~\footnote{In the literature, AUROC optimization is simply referred to as AUC optimization.} has attracted significant attention in the literature. Recent success of DL by optimizing AUROC on large-scale medical image data has demonstrated the importance of large-scale stochastic optimization algorithms and the necessity of accurate surrogate function~\cite{yuan2020robust}. Earlier papers~\cite{herschtal2004optimising, joachims2005support} focus on learning a linear model based on the pairwise surrogate loss and could suffer from a high computational cost, which could be as high as quadratic of the size of training data. To address the computational challenge, online and stochastic optimization algorithms have been proposed~\cite{gao2013one,liu2018fast,natole2018stochastic,ying2016stochastic,Zhaoicml11}. 
Recently, \cite{guo2020communication,guo2020fast,liu2020stochastic,yan2020optimal} proposed stochastic deep AUC maximization algorithms by formulating the problem as non-convex strongly-concave min-max optimization problem, and derived fast convergence rate under PL condition, and in federated learning setting as well~\cite{guo2020communication}. More recently, \citet{yuan2020robust} 
demonstrated the success of their methods on medical image classification tasks, e.g., X-ray image classification, melanoma classification based on skin images. However, an algorithm that maximizes the AUROC might not necessarily maximize AUPRC, which entails the development of  efficient  algorithms for DL by maximizing AUPRC.

\noindent
{\bf AUPRC Optimization.} AUPRC optimization is much more challenging than AUROC optimization since the objective is even not decomposable over pairs of examples. Although AUPRC optimization has been considered in the literature (cf.~\cite{aistats17,qin2008a,NEURIPS2019_3ce257b3} and references therein), efficient scalable algorithms for DL with provable convergence guarantee is still lacking. Some earlier works tackled this problem by using traditional optimization techniques, e.g., hill climbing search~\cite{10.1145/1076034.1076115}, cutting-plane method~\cite{10.1145/1277741.1277790},  dynamic programming~\cite{pmlr-v48-songb16}, and by developing acceleration techniques in the framework of SVM~\cite{NIPS2014_eb86d510}. These approaches are not scalable to big data for DL. There is a long list of studies in information retrieval~\cite{NIPS2006_af44c4c5,10.5555/2984093.2984129,metzler2005markov,qin2008a, qi2020simple} and computer vision~\cite{brown2020smooth,Cakir_2019_CVPR, Chen_2019_CVPR,chen2020ap,Henderson_2017,Mohapatra2018EfficientOF,Rolinek_2020_CVPR,NEURIPS2020_b2eeb736}, which have made efforts towards maximizing the AP score. However, most of them focus on how to compute an approximate gradient of the AP function or its smooth approximation, and provide no convergence guarantee for stochastic optimization based on mini-batch averaging. Due to lack of principled design, these previous methods when applied to deep learning are sensitive to the mini-batch size~\cite{Cakir_2019_CVPR, qin2008a,Rolinek_2020_CVPR} and usually require a large mini-batch size in order to achieve good performance. In contrast, our stochastic algorithms are designed in a principled way to guarantee convergence without requiring a large mini-batch size as confirmed by our studies as well. Recently, \cite{aistats17}  formulates the objective function as a constrained optimization problem using a surrogate function, and then casts it into a min-max saddle-point problem, which facilitates the use of stochastic min-max algorithms. 
However, they do not provide any convergence analysis for AUPRC maximization. 
In contrast, this is the first work that directly optimizes a surrogate function of AP (an unbaised estimator of AUPRC in the limit) and provides theoretical convergence guarantee for the proposed stochastic algorithms.

\noindent
{\bf Stochastic Compositional Optimization.}
Optimization of a two-level compositional function in the form of $\E_{\xi}[f(\E_{\zeta}[g(\w; \zeta)]; \xi)]$ where $\xi$ and $\zeta$ are independent random variables,  or its finite-sum variant has been studied extensively in the literature~\cite{DBLP:journals/corr/abs-2008-10526,chen2021solving,DBLP:journals/mp/WangFL17,DBLP:conf/aaai/HuoGLH18,DBLP:conf/aistats/LianWL17,DBLP:journals/corr/abs-1806-00458,DBLP:journals/corr/abs-1809-02505,DBLP:journals/tnn/LiuLT19,qi2020practical,DBLP:journals/jmlr/WangLF17,DBLP:conf/ijcai/YuH17, pmlr-v97-zhang19n,qi2020attentional, qi2022stochastic}.  In this paper, we formulate the surrogate function of AP into a similar but more complicated two-level compositional function of the form $\E_{\xi}[f(\E_{\zeta}g(\w; \zeta, \xi))]$, where $\xi$ and $\zeta$ are independent and $\xi$ has a finite support. The key difference between our formulated compositional function and the ones considered in previous work is that the inner function $g(\w; \zeta, \xi)$ also depends on the random variable $\xi$ of the outer level. Such subtle difference will complicate the algorithm design and the convergence analysis as well. Nevertheless, the proposed algorithm and its convergence analysis are built on previous studies of stochastic two-level compositional optimization.
\vspace*{-0.1in}
\section{The Proposed Method}
\vspace*{-0.1in}
{\bf Notations.} We consider binary classification problem. Denote by $(\x,y)$ a data pair, where $\x\in\R^d$ denotes the input data and $y\in\{1, -1\}$ denotes its class label. Let $h(\x)=h_\w(\x)$ denote the predictive function parameterized by a parameter vector $\w\in\R^D$ (e.g., a deep neural network). Denote by $\I(\cdot)$ an indicator  function that outputs 1 if the argument is true and zero otherwise. To facilitate the presentation,  denote by $X$ a random data, by $Y$  its label and by $F=h(X)$  its prediction score. Let $\D=\{(\x_1, y_1), \ldots, (\x_n, y_n)\}$ denote the set of all training examples and $\D_+=\{\x_i: y_i =1\}$ denote the set of all positive examples.  Let $n_+ = |\mathcal D_+|$ denote the number of positive examples. $\x_i\sim \D$ means that $\x_i$ is randomly sampled from $\D$.
\abovedisplayskip=0pt
\abovedisplayshortskip=0pt
\belowdisplayskip=0pt
\belowdisplayshortskip=0pt

 \subsection{Background on AUPRC and its estimator AP}
Following the work of Bamber~\cite{Bamber1975TheAA}, AUPRC is an average of the precision weighted by the probability of a given threshold, which can be expressed as
\begin{align*}
A = \int_{-\infty}^{\infty}\Pr(Y=1|F\geq c) d\Pr(F\leq c|Y=1),
\end{align*}
where $\Pr(Y=1|F\geq c)$ is the precision at the threshold value of $c$. The above integral is an importance-sampled Monte Carlo integral, by which we may interpret AUPRC as the fraction of positive examples among those examples whose output values exceed a randomly
selected threshold $c\sim F(X)|Y=1$.

For a finite set of examples $\D=\{(\x_i, y_i), i=1,\ldots, n\}$ with the prediction score for each example $\x_i$ given by $h_\w(\x_i)$, we consider to use AP to approximate AUPRC, which is given by
\begin{align}
\label{eqn:OnlAvgPre-p}
    \text{AP}&=\frac{1}{n_{+}} \sum\limits_{i=1}^n\I(y_i=1) \frac{\sum\limits_{s=1}^n\I(y_s=1)\I(h_\w(\x_s) \geq h_\w(\x_i))}{\sum\limits_{s=1}^n \I(h_\w(\x_s)\geq h_\w(\x_i))},
\end{align}
where $n_+$ denotes the number of positive examples. It can be shown that AP is an unbiased estimator in the limit $n\rightarrow \infty$~\cite{10.1007/978-3-642-40994-3_29}.

However, the non-continuous indicator function $\I(h_\w(\x_s) \geq h_\w(\x_i))$ in both numerator and denominator in~(\ref{eqn:OnlAvgPre-p}) makes the optimization non-tractable. To tackle this, we use a loss function $\ell(\w; \x_s, \x_i)$ as a surrogate function of $\I(h_\w(\x_s)\geq h_\w(\x_i))$. One can consider different surrogate losses, e.g., hinge loss, squared hinge loss, and smoothed hinge loss, and exponential loss. In this paper, we will consider a smooth surrogate  loss function to facilitate the development of an optimization algorithm, e.g., a squared hinge loss  $\ell(\w;\x_s;\x_i) = (\max\{m -(h_\w(\x_i) - h_\w(\x_s)), 0\})^2$,  where $m$ is a margin parameter. Note that we do not require $\ell$ to be a convex function, hence one can also consider non-convex surrogate loss such as ramp loss.
As a result, our problem becomes
\begin{align}
  &\min_{\w} P(\w)=\frac{1}{n_{+}}\sum\limits_{\x_i\in\mathcal D_+}\frac{-\sum\limits_{s=1}^n\I(y_s=1)\ell(\w;\x_s;\x_i)}{\sum\limits_{s=1}^n \ell(\w;\x_s;\x_i)}. 
\end{align}

 \subsection{Stochastic Optimization of AP (SOAP)}
We cast the problem into a finite-sum of compositional functions. To this end, let us define a few notations:
\begin{equation}
 \begin{aligned}
 &g(\w;\x_j,\x_i)= [g_1(\w;\x_j,\x_i), g_2(\w;\x_j,\x_i)]^{\top} = [\ell(\w;\x_j,\x_i)\I(y_j = 1),\ell(\w;\x_j,\x_i) ]^{\top} \\
 &g_{\x_i}(\w)=\E_{\x_j\sim \D}[g(\w; \x_j, \x_i)],
 \end{aligned}
\end{equation}
where $g_{\x_i}(\w):\R^d\rightarrow \R^2$. Let $f(\mathbf s) = -\frac{s_1}{s_2}:\R^2\rightarrow \R$. 
Then, we can write the objective function for maximizing AP as a sum of compositional functions:
\begin{equation}\label{eqn:tsdc}
    \begin{aligned}
      P(\w) & =\frac{1}{n_{+}}\sum\limits_{\x_i\in\D_+}f(g_{\x_i}(\w)) =\E_{\x_i\sim\mathcal D_+}[f(g_{\x_i}(\w))].
    \end{aligned}
\end{equation}
We refer to the above problem as an instance of {\bf two-level stochastic coupled compositional functions}. It is similar to the two-level stochastic  compositional functions considered in literature~\cite{DBLP:journals/mp/WangFL17,DBLP:journals/jmlr/WangLF17} but with a subtle difference. The difference is that  in our formulation the inner function $g_{\x_i}(\w)=\E_{\x_j\sim \D}[g(\w; \x_j, \x_i)]$ depends on the random variable $\x_i$ of the outer level. This difference makes the proposed algorithm slightly complicated by estimating $g_{\x_i}(\w)$ separately for each positive example. It also complicates the analysis of the proposed algorithms. Nevertheless, we can still employ the techniques developed for optimizing stochastic compositional functions to design the algorithms and develop the analysis for optimizing the objective~(\ref{eqn:tsdc}).
\setlength{\textfloatsep}{5pt}

In order to motivate the proposed method, let us consider how to compute the gradient of $P(\w)$. Let the gradient of $g_{\x_i}(\w)$ be denoted by  $\nabla_{\w} g_{\x_i}(\w)^{\top}=( \nabla_{\w} [g_{\x_i}(\w)]_1,\nabla_{\w}[g_{\x_i}(\w)]_2)$. Then we have
\begin{equation}
    \begin{aligned}
      &\nabla_\w P(\w)= \frac{1}{n_+}\sum\limits_{\x_i\in\D_+}\nabla_\w g_{\x_i}(\w)^{\top}\nabla f(g_{\x_i}(\w))\\
      &=\frac{1}{n_+}\sum\limits_{\x_i\in \D_+}\nabla_\w g_{\x_i}(\w)^\top\bigg(\frac{-1}{[g_{\x_i}(\w)]_2},\frac{[g_{\x_i}(\w)]_1}{([g_{\x_i}(\w)]_2)^2}\bigg)^\top.
    \end{aligned}
\end{equation}

The major cost for computing $\nabla_{\w}P(\w)$ lies at evaluating $g_{\x_i}(\w)$  and its gradient $\nabla_{\w} g_{\x_i}(\w)$, which involves passing through all examples in $\D$.

To this end, we will approximate these quantities by stochastic samples. The  gradient $\nabla_{\w} g_{\x_i}(\w)$ can be simply approximated by the stochastic gradient, i.e.,
\begin{align}\label{eqn:stg}
    \widehat \nabla_\w g_{\x_i}(\w) =\left(\begin{array}{c} \frac{1}{B}\sum_{\x_j\in \mathcal B}\I(y_j=1)\nabla\ell(\w; \x_j, \x_i)\\
    \frac{1}{B}\sum_{\x_j\in \mathcal B}\nabla\ell(\w; \x_j, \x_i)
    \end{array}\right),
\end{align}
where $\mathcal B$ denote a set of $B$ random samples from $\D$.
For estimating $g_{\x_i}(\w)=\E_{\x_j\sim\D}g(\w; \x_j, \x_i)$, however, we need to ensure its approximation error is controllable due to the compositional structure such that the convergence can be guaranteed. 
We borrow a technique from the literature of stochastic compositional optimization~\cite{DBLP:journals/mp/WangFL17} by using moving average estimator for estimating $g_{\x_i}(\w)$ for all positive examples.  
To this end, we will maintain a matrix $\u=[\u^1, \u^2]$ with each column indexable by any positive example, i.e., $\u^1_{\x_i}, \u^2_{\x_i}$ correspond to the moving average estimator of $[g_{\x_i}(\w)]_1$ and $[g_{\x_i}(\w)]_2$, respectively. The matrix $\u$ is updated by the subroutine UG in Algorithm~\ref{alg:2}, where $\gamma\in(0,1)$ is a parameter. It is notable that in Step 3 of Algorithm~\ref{alg:2}, we clip the moving average update of $\u^2_{\x_i}$ by a lower bound $u_0$, which is a given parameter. This step can ensure the division in computing the  stochastic gradient estimator in~(\ref{eqn:stes}) always valid and is also important for convergence analysis.    With these stochastic estimators, we can compute an estimate of $\nabla P(\w)$ by equation~(\ref{eqn:stes}),
where $\mathcal B_+$ includes a batch of sampled positive data.   
With this stochastic gradient estimator, we can employ SGD-style method and Adam-style shown in Algorithm~\ref{alg:3} to update the model parameter $\w$.  The final algorithm named as SOAP is presented in Algorithm~\ref{alg:soap}.

\begin{algorithm}[t]
    \centering
    \caption{SOAP}\label{alg:soap}
    \begin{algorithmic}[1]
        \STATE \textbf{Input:}  $\gamma, \alpha, u_0$, and other parameters for SGD-stype update or Adam-stype update.
        \STATE Initialize $\w_1\in \R^d$, $\u\in \R^{|n_+|\times 2}$

        \FOR {$t= 1,\ldots, T $}
      \STATE Draw a batch of $B_+$ positive samples denoted by  $\mathcal B_+$. 
      \STATE Draw a batch of $B$ samples  denoted by $\mathcal B$. 
     \STATE $\u =\text{UG}(\mathcal{B}, \mathcal B_+, \u, \w_t, \gamma, u_0)$
     \STATE Compute (biased) Stochastic Gradient Estimator
     \begin{align}\label{eqn:stes}
     &G(\w_t) = \frac{1}{B_+}\sum\limits_{\x_i\in\mathcal B_+} \sum\limits_{\x_j\in\mathcal B}\frac{(\u_{\x_i}^1 - \u_{\x_i}^2\I(\y_j=1))\nabla \ell(\w;\x_j,\x_i) }{B(\u_{\x_i}^2)^2}
     \end{align}
     \STATE Update $\w_{t+1}$ by a SGD-style method or by a Adam-style method
     \begin{equation*}
         \begin{aligned}
           \w_{t+1}   =\text{UW}(\w_t, G(\w_t)) 
         \end{aligned}
     \end{equation*}

        \ENDFOR
        \STATE \textbf{Return:} last solution. 
    \end{algorithmic}
\end{algorithm}

\begin{minipage}{0.49\textwidth}
\begin{algorithm}[H]
    \caption{UG($\mathcal B, \mathcal B_+, \u,  \w_t, \gamma, u_0$)}\label{alg:2}
    \begin{algorithmic}[1]
       \FOR{each positive $\x_i\in\mathcal B_+$}
       \STATE Compute
       \begin{align*}
       [\tilde{g}_{\x_i}(\w_t)]_1 & = \frac{1}{|\mathcal B|}\sum\limits_{x_j\in\mathcal B \atop y_j=1} \ell(\w_t;\x_j,\x_i)\\
       [\tilde{g}_{\x_i}(\w_t)]_2 & = \frac{1}{|\mathcal B|}\sum\limits_{\x_j\in\mathcal B} \ell(\w_t;\x_j,\x_i)
       \end{align*}\\
       \STATE Compute $\u^1_{\x_i} = (1-\gamma)\u^1_{\x_i} + \gamma [\tilde{g}_{\x_i}(\w_t)]_1$ \\
        $\u^2_{\x_i} = \max((1-\gamma)\u^2_{\x_i} + \gamma [\tilde{g}_{\x_i}(\w_t)]_2, u_0)$
        \ENDFOR
        \STATE {\bf Return} $\u$
    \end{algorithmic}
\end{algorithm}
\end{minipage}
\hfill
\begin{minipage}{0.45\textwidth}
\begin{algorithm}[H]
    \caption{UW($\w_t, G(\w_t)$)}\label{alg:3}
    \begin{algorithmic}[1]
       \STATE Option 1: SGD-style update (paras: $\alpha$)
       \[
         \w_{t+1}= \w_t - \alpha G(\w_t)
       \]
       \STATE Option 2: Adam-style update (paras: $\alpha, \epsilon, \eta_1, \eta_2$)
       \begin{align*}
h_{t+1} &= \eta_1 h_{t} + (1-\eta_1)G(\w_t) \\
v_{t+1} &= \eta_2 \hat{v}_t + (1-\eta_2)(G(\w_t))^2 \\
\w_{t+1} &= \w_t - \alpha \frac{h_{t+1}}{\sqrt{\epsilon + \hat{v}_{t+1}}} 
\end{align*}
where $\hat v_t = v_t$ (Adam) or $\hat v_{t}=\max(\hat v_{t-1}, v_t)$ (AMSGrad)
\STATE {\bf Return:} $\w_{t+1}$
    \end{algorithmic}
\end{algorithm}
\end{minipage}

\vspace{-0.1in}
\subsection{Convergence Analysis}
\label{sec:convergence-analysis}
In this subsection, we present the convergence results of SOAP and also highlight its convergence analysis. To this end, we first present the following assumption.
\begin{ass}
\label{ass:1}
Assume that
    (a) there exists $\Delta_1$ such that $P(\w_1) - \min_\w P(\w)\leq \Delta_1$;
    (b) there exist $C, M>0$ such that $\ell(\w; \x_i, \x_i)\geq C$ for any $\x_i\in\D_+$, $\ell(\w; \x_j, \x_i)\leq M$, and $\ell(\w; \x_j, \x_i)$ is Lipscthiz continuous and smooth with respect to $\w$ for any $\x_i\in\mathcal D_+, \x_j\in\mathcal D$;
    (c) there exists $V>0$ such that  $\E_{\x_j\sim \D}[\| g(\w; \x_j, \x_i) - g_{\x_i}(\w)\|^2] \leq V$, and  $\E_{\x_j\sim\D}[\| \nabla g(\w; \x_j, \x_i) - \nabla g_{\x_i}(\w)\|^2] \leq V$ for any $\x_i$. 
\end{ass}
\vspace*{-0.1in}
With a bounded score function $h_\w(\x)$ the above assumption can be easily satisfied. 
Based on the above assumption, we can prove that the objective function $P(\w)$ is smooth. 
\begin{lem}
\label{lem:bounded:function}
Suppose Assumption \ref{ass:1} holds, then there exists $L>0$ such that 
$P(\cdot)$ is  $L$-smooth. In addition, there exists $u_0\geq C/n$ such that $g_{\x_i}(\w)\in\Omega=\{\u\in\R^2, 0\leq [\u]_1\leq M, u_0\leq [\u]_2\leq M\}$, $\forall \x_i\in \D_+$.
\end{lem}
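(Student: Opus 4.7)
The statement has two parts that should be handled in order: first, showing the containment $g_{\x_i}(\w)\in\Omega$ with an explicit $u_0$; second, deducing the $L$-smoothness of $P$ from a chain-rule argument that exploits exactly that containment (so that the nonlinear outer map $f(\s)=-s_1/s_2$ has bounded first and second derivatives on the relevant region).

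\textbf{Step 1 (containment of $g_{\x_i}(\w)$ in $\Omega$).} Write $g_{\x_i}(\w)=\E_{\x_j\sim\D}[g(\w;\x_j,\x_i)]$ where the coordinates are $\ell(\w;\x_j,\x_i)\I(y_j=1)$ and $\ell(\w;\x_j,\x_i)$. Nonnegativity of $\ell$ together with the upper bound $\ell\le M$ from Assumption \ref{ass:1}(b) gives $0\le[g_{\x_i}(\w)]_1\le M$ and $[g_{\x_i}(\w)]_2\le M$. For the lower bound on the second coordinate, observe that $\x_j\sim\D$ is uniform over $n$ points, so the atom at $\x_j=\x_i$ contributes $\tfrac{1}{n}\ell(\w;\x_i,\x_i)\ge \tfrac{C}{n}$; all other contributions are nonnegative, yielding $[g_{\x_i}(\w)]_2\ge C/n$. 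Setting $u_0:=C/n$ gives the claim.

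\textbf{Step 2 (Lipschitz/smoothness constants for the inner and outer pieces).} Assumption \ref{ass:1}(b) states that $\ell(\w;\x_j,\x_i)$ is Lipschitz and smooth in $\w$ uniformly in the data indices, with some constants $L_\ell$ and $S_\ell$. Since averaging (or taking expectations) preserves both properties, each $g_{\x_i}(\w)$ is $L_g$-Lipschitz and $S_g$-smooth in $\w$ with $L_g,S_g$ depending only on $L_\ell,S_\ell$. For the outer map, $f(\s)=-s_1/s_2$ has $\nabla f=(-1/s_2,\,s_1/s_2^2)$ and a Hessian whose entries are polynomial in $1/s_2$ and $s_1/s_2^3$. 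Restricting to $\Omega$, where $s_2\ge u_0>0$ and $s_1\le M$, yields bounds $\|\nabla f\|\le L_f$ and $\|\nabla^2 f\|\le S_f$ depending polynomially on $M$ and $1/u_0$. Here it is crucial that Step 1 guarantees $g_{\x_i}(\w)\in\Omega$ for \emph{every} $\w$, so we never leave the region on which $f$ is well behaved.

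\textbf{Step 3 (smoothness of $P$ via composition).} By the chain rule $\nabla(f\circ g_{\x_i})(\w)=\nabla g_{\x_i}(\w)^\top\nabla f(g_{\x_i}(\w))$. Writing the difference at $\w_1$ and $\w_2$ as
\[
\bigl[\nabla g_{\x_i}(\w_1)-\nabla g_{\x_i}(\w_2)\bigr]^\top\nabla f(g_{\x_i}(\w_1))+\nabla g_{\x_i}(\w_2)^\top\bigl[\nabla f(g_{\x_i}(\w_1))-\nabla f(g_{\x_i}(\w_2))\bigr]
\]
and bounding each piece using Step 2 gives
\[
\|\nabla(f\circ g_{\x_i})(\w_1)-\nabla(f\circ g_{\x_i})(\w_2)\|\le (L_fS_g+L_g^2S_f)\,\|\w_1-\w_2\|,
\]
so each $f\circ g_{\x_i}$ is smooth. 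Since $P$ is their average over $\x_i\in\D_+$, $P$ is $L$-smooth with $L:=L_fS_g+L_g^2S_f$.

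\textbf{Main obstacle.} The only genuinely delicate point is the lower bound in Step 1; once $u_0=C/n$ is established, Steps 2 and 3 are routine applications of the composition rule for Lipschitz-smooth functions. The $\x_i=\x_j$ diagonal term is the sole mechanism keeping $[g_{\x_i}(\w)]_2$ uniformly away from zero, and losing this lower bound would cause $\nabla f$ to blow up and invalidate the smoothness conclusion. This also explains why Assumption \ref{ass:1}(b) specifically includes the condition $\ell(\w;\x_i,\x_i)\ge C$.
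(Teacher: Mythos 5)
Your proposal is correct and follows essentially the same route as the paper: establish the containment $g_{\x_i}(\w)\in\Omega$ with $u_0=C/n$ via the diagonal term $\tfrac{1}{n}\ell(\w;\x_i,\x_i)\ge C/n$, bound the Lipschitz and smoothness constants of the inner maps $g_{\x_i}$ and of $f$ restricted to $\Omega$ (the paper's Lemma~\ref{lem:basic-smooth}), and then apply the same two-term chain-rule decomposition to get $L=C_fL_g+L_fC_g^2$ (your $L_fS_g+L_g^2S_f$ in the other notational convention). Your Step 1 is in fact more explicit than the paper, which dismisses the lower bound as obvious.
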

Next, we highlight the convergence analysis of SOAP employing the SGD-stype update and include that for employing Adam-style update in the supplement. Without loss of generality, we assume $|\mathcal B_+|=1$ and the positive sample in $\mathcal B_+$ is randomly selected from $\D_+$ with replacement. When the context is clear, we abuse the notations $g_i(\w)$ and $\u_{i}$ to denote $g_{\x_i}(\w)$  and $\u_{\x_i}$ below, respectively. We first establish the following lemma following the analysis of non-convex optimization. 
\begin{lem}
\label{lem:lem-SGD}
With $\alpha\leq 1/2$, running $T$ iterations of SOAP (SGD-style) updates, we have
\begin{align*}
  \frac{\alpha}{2}\E[\sum_{t=1}^T\|\nabla P(\w_t)\|^2 ]&\leq  \E[\sum_t( P(\w_t) - P(\w_{t+1})) ]+ \frac{\alpha C_1}{2}\E[\sum_{t=1}^T\|g_{i_t}(\w_t)-\u_{i_t}\|^2] + \alpha^2T C_2,
\end{align*}
where $i_t$ denotes the index of the sampled positive data at iteration $t$, $C_1$ and $C_2$ are proper constants. 
\end{lem}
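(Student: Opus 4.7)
The plan is to derive a standard one-step descent inequality from the $L$-smoothness of $P$ (established in Lemma~\ref{lem:bounded:function}), then sum from $t=1$ to $T$. Applying smoothness to the SGD update $\w_{t+1}=\w_t-\alpha G(\w_t)$ gives
\begin{equation*}
P(\w_{t+1})\leq P(\w_t)-\alpha\langle\nabla P(\w_t),G(\w_t)\rangle+\frac{L\alpha^2}{2}\|G(\w_t)\|^2.
\end{equation*}
Let $\mathcal F_t$ be the sigma-algebra generated by all randomness prior to drawing $i_t$ and $\mathcal B$, so that $\w_t$ and the entire table $\u$ are $\mathcal F_t$-measurable, and define the conditional mean $\widetilde\nabla P(\w_t)=\E[G(\w_t)\mid\mathcal F_t]$. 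Because $\u_i$ is treated as fixed given $\mathcal F_t$ while $\mathcal B$ is drawn freshly and provides an unbiased stochastic gradient of each $g_i(\w_t)$, one gets $\widetilde\nabla P(\w_t)=\E_{i\sim\mathcal D_+}[\nabla g_i(\w_t)^\top\nabla f(\u_i)]$, which differs from the true $\nabla P(\w_t)=\E_{i\sim\mathcal D_+}[\nabla g_i(\w_t)^\top\nabla f(g_i(\w_t))]$ only in that $\nabla f$ is evaluated at $\u_i$ instead of $g_i(\w_t)$. Using the polarization identity $\langle a,b\rangle=\tfrac12(\|a\|^2+\|b\|^2-\|a-b\|^2)$ with $a=\nabla P(\w_t)$, $b=\widetilde\nabla P(\w_t)$ and taking the conditional expectation yields
\begin{equation*}
\E[\langle\nabla P(\w_t),G(\w_t)\rangle\mid\mathcal F_t]=\tfrac12\|\nabla P(\w_t)\|^2+\tfrac12\|\widetilde\nabla P(\w_t)\|^2-\tfrac12\|\nabla P(\w_t)-\widetilde\nabla P(\w_t)\|^2.
\end{equation*}

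The main technical step is bounding the bias term by $C_1\E[\|g_{i_t}(\w_t)-\u_{i_t}\|^2\mid\mathcal F_t]$. Applying Jensen's inequality to the average over $i$, we get
\begin{equation*}
\|\nabla P(\w_t)-\widetilde\nabla P(\w_t)\|^2\leq \E_{i\sim\mathcal D_+}\bigl\|\nabla g_i(\w_t)^\top\bigl(\nabla f(g_i(\w_t))-\nabla f(\u_i)\bigr)\bigr\|^2.
\end{equation*}
Here is where Lemma~\ref{lem:bounded:function} and the lower-clipping $u_0$ in the \textbf{UG} subroutine are crucial: both $\u_i$ and $g_i(\w_t)$ lie in the bounded set $\Omega$, on which $\nabla f(\mathbf s)=(-1/s_2,s_1/s_2^2)^\top$ is Lipschitz (since $s_2\geq u_0$ is bounded away from zero). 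Combined with the Lipschitz continuity of $\ell$ from Assumption~\ref{ass:1}(b), which bounds $\|\nabla g_i(\w_t)\|$ uniformly, this yields the desired constant $C_1$. Since $i_t$ is sampled uniformly from $\mathcal D_+$, $\E_i\|g_i(\w_t)-\u_i\|^2=\E[\|g_{i_t}(\w_t)-\u_{i_t}\|^2\mid\mathcal F_t]$.

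It remains to bound the $\|G(\w_t)\|^2$ term. Expanding \eqref{eqn:stes}, using $\u_{i}^2\geq u_0$, the bounds $|\u_i^1|,|\u_i^2|\leq M$, and the Lipschitz property of $\ell$ from Assumption~\ref{ass:1}(b), one obtains $\E\|G(\w_t)\|^2\leq C_2'$ for a constant $C_2'$ depending only on $M$, $u_0$, and the Lipschitz constant. Plugging all three bounds into the descent inequality, dropping the non-negative $\tfrac{\alpha}{2}\|\widetilde\nabla P(\w_t)\|^2$ term, taking total expectation, rearranging, and summing over $t=1,\ldots,T$ makes $\sum_t(P(\w_t)-P(\w_{t+1}))$ telescope and gives the claim with $C_2=LC_2'/2$. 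The condition $\alpha\leq 1/2$ is used only to ensure that the $\alpha L\alpha/2$ factor on the $\|G(\w_t)\|^2$ term remains small enough to be absorbed into $\alpha^2C_2$.

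The main obstacle is the bias control: the estimator $G(\w_t)$ is biased not only because of mini-batch sampling but also because the moving averages $\u_i$ lag behind $g_i(\w_t)$, and this lag is amplified by the nonlinear outer function $f$. Getting a clean $C_1\|g_{i_t}(\w_t)-\u_{i_t}\|^2$ bound requires that $\nabla f$ be Lipschitz on a region containing both $\u_i$ and $g_i(\w_t)$, which is exactly what the clipping at $u_0$ in Algorithm~\ref{alg:2} and the containment result of Lemma~\ref{lem:bounded:function} jointly guarantee.
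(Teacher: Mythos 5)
Your proposal is correct and follows essentially the same route as the paper's proof: a smoothness-based one-step descent inequality, isolation of the bias of $G(\w_t)$ caused by evaluating $\nabla f$ at $\u_{i_t}$ instead of $g_{i_t}(\w_t)$, a bound of that bias by $C_1\|g_{i_t}(\w_t)-\u_{i_t}\|^2$ via the Lipschitzness of $\nabla f$ on $\Omega$ (guaranteed by the $u_0$-clipping) together with the uniform bound on $\|\nabla g_i\|$, a uniform bound on $\|G(\w_t)\|^2$ for the $C_2$ term, and telescoping. The only cosmetic difference is that you use the polarization identity and drop the nonnegative $\|\widetilde\nabla P(\w_t)\|^2$ term where the paper applies Young's inequality $ab\leq a^2/2+b^2/2$ to the cross term; the two yield the same final bound.
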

Our key contribution is the following lemma that bounds the second term in the above upper bound. 
\begin{lem}\label{lem:cumulative_var}Suppose Assumption \ref{ass:1} holds, with $\u$ initialized by~(\ref{eqn:stg}) for every $\x_i\in\D_+$ we have
\begin{equation}
    \begin{aligned}
     \E[\sum_{t=1}^T\|g_{i_t}(\w_t)-\u_{i_t}\|^2]
     &\leq \frac{n_+ V }{\gamma} + \gamma V T + 2\frac{n_+^2\alpha^2TC_3}{\gamma^2},
    \end{aligned}
\end{equation}
where $C_3$ is a proper constant. 
\end{lem}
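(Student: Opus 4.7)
Define the aggregate tracking error $\Phi_t := \sum_{\x_i \in \D_+}\|g_{\x_i}(\w_t) - \u_{\x_i}^t\|^2$. Since $i_t$ is sampled uniformly from $\D_+$, the target quantity is $\E[\sum_t\|g_{i_t}(\w_t)-\u_{i_t}\|^2] = \frac{1}{n_+}\sum_t \E[\Phi_t]$, so it suffices to derive a one-step recursion on $\E[\Phi_t]$ and unroll it.

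For the one-step bound I would split the $n_+$ summands in $\Phi_{t+1}$ according to whether $i=i_t$. When $i = i_t$, the moving average fires: write
$g_i(\w_{t+1}) - \u_i^{t+1} = (1-\gamma)(g_i(\w_t) - \u_i^t) + \gamma(g_i(\w_t) - \tilde g_i(\w_t)) + (g_i(\w_{t+1}) - g_i(\w_t))$,
apply Young's inequality with a parameter $\beta>0$ to isolate the last ``drift'' term, expand the square of the first two, and take expectation over the inner minibatch $\mathcal{B}$. The cross term vanishes because $\tilde g_i$ is conditionally unbiased, and Assumption~\ref{ass:1}(c) bounds its variance by $V$. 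When $i \neq i_t$ we have $\u_i^{t+1} = \u_i^t$, so only $\|g_i(\w_{t+1}) - g_i(\w_t)\|^2 \leq L_g^2 \alpha^2 \|G(\w_t)\|^2$ appears, where $L_g$ is the Lipschitz constant inherited from Assumption~\ref{ass:1}(b). The clipping in Step~3 of Algorithm~\ref{alg:2} only helps: by Lemma~\ref{lem:bounded:function}, $[g_{\x_i}(\w)]_2 \geq u_0$, so clipping is a projection onto a set containing the target and is non-expansive. Averaging over the uniform choice of $i_t$ yields
\begin{equation*}
\E[\Phi_{t+1} \mid \mathcal{F}_t] \leq (1+\beta)\left(1 - \frac{1-(1-\gamma)^2}{n_+}\right)\Phi_t + (1+\beta)\gamma^2 V + n_+\left(1+\frac{1}{\beta}\right)L_g^2 \alpha^2 \|G(\w_t)\|^2.
\end{equation*}

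Choosing $\beta = \gamma/(2n_+)$ and using $1-(1-\gamma)^2 \geq \gamma$ collapses the contraction factor to $1 - \gamma/(2n_+)$ at the cost of blowing up the drift coefficient to $O(n_+^2/\gamma)$. Assumption~\ref{ass:1}(b) together with the lower bound $[\u_i^t]_2 \geq u_0$ make the stochastic gradient estimator uniformly bounded, $\|G(\w_t)\|^2 \leq C_3$ for an absolute constant. Unrolling the resulting linear recursion gives a geometric series summing to at most $2n_+/\gamma$. Combined with $\E[\Phi_0] \leq n_+ V$ from initialising each $\u_{\x_i}^0$ by the stochastic estimator (\ref{eqn:stg}), summing over $t=1,\ldots,T$ and dividing by $n_+$ produces the three claimed terms $\frac{n_+ V}{\gamma}$, $\gamma V T$, and $\frac{2 n_+^2 \alpha^2 T C_3}{\gamma^2}$.

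The main obstacle is the simultaneous balancing of $\beta$, $\gamma$, and $n_+$: the contraction is already diluted by a factor $1/n_+$ since only one entry of $\u$ is refreshed per step, so a larger $\beta$ kills the contraction while a smaller $\beta$ inflates the drift. It is also essential to exploit the conditional unbiasedness of $\tilde g_i$ so that the stochastic-noise term enters at order $\gamma^2 V$ rather than $\gamma V$; otherwise the unrolled sum accumulates $\Theta(V T)$ and the lemma fails.
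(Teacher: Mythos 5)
Your proposal is correct and reaches the stated bound (up to immaterial constant factors), but it takes a genuinely different route from the paper. The paper partitions the $T$ iterations into $n_+$ groups according to which positive example is drawn, proves a recursion \emph{within each group} with the full contraction factor $(1-\gamma)$, and pays for the parameter drift accumulated over the random gap between two consecutive selections of the same example; the $n_+^2$ enters through the second moment of that geometric gap, $\E[(t^i_k-t^i_{k-1})^2]\le 2n_+^2$ (Lemma~\ref{lem:bound-stale}). You instead run a one-step recursion in real time on the aggregate potential $\Phi_t=\sum_{\x_i\in\D_+}\|g_{\x_i}(\w_t)-\u_{\x_i}\|^2$, accepting a contraction diluted to $1-\Theta(\gamma/n_+)$ because only one coordinate is refreshed per step, and a drift term inflated by $n_+$ because every target $g_{\x_i}(\w_t)$ moves; the $n_+^2/\gamma^2$ then comes out of the Young parameter $\beta=\gamma/(2n_+)$ and the division by the contraction rate. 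Your route avoids any reasoning about random inter-selection gaps and the per-group conditioning, which is arguably cleaner; the paper's route keeps the sharp $(1-\gamma)$ contraction and isolates all cross-coordinate coupling in the gap statistics. Two small points to tidy up: (i) $\u_{i_t}$ in the lemma is the \emph{post}-update value at iteration $t$, so $\E\|g_{i_t}(\w_t)-\u_{i_t}\|^2$ is not exactly $\tfrac{1}{n_+}\E[\Phi_t]$ with a consistently-timed $\Phi_t$; the fix is one line, since the post-update error is at most the pre-update error plus $\gamma^2 V$, which is absorbed into the $\gamma VT$ term; (ii) your constants on the $\gamma VT$ and $n_+^2\alpha^2 T/\gamma^2$ terms come out somewhat larger than the lemma's, which is harmless given how the lemma is consumed in Theorem~\ref{thm:4}.
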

\vspace*{-0.1in}
{\bf Remark:} The innovation of proving the above lemma is by grouping $\u_{i_t}, t=1, \ldots, T$ into $n_+$ groups corresponding to the $n_+$ positive examples, and then establishing the recursion of the error $\|g_{i_t}(\w_t)-\u_{i_t}\|^2$ within each group, and then summing up these recursions together. %

Based on the two lemmas above, we establish the following convergence of SOAP with a SGD-style update. 
\begin{thm}\label{thm:4}
Suppose Assumption ~\ref{ass:1} holds,  let the parameters be  $\alpha = \frac{1}{n_+^{2/5}T^{3/5}}$,$\gamma = \frac{n_+^{2/5}}{T^{2/5}} $, $\forall\  t\in 1,\cdots, T$, and $T>n_+$. Then after running $T$ iterations, SOAP with a SGD-style update satisfies
$ \E\left[ \frac{1}{T}\sum\limits_{t=1}^T\|\nabla P(\w_t)\|^2\right] \leq  O(\frac{n_+^{2/5}}{T^{2/5}}),$ 
where  $O$ suppresses constant numbers.
\end{thm}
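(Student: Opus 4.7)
The plan is to combine Lemma~\ref{lem:lem-SGD} with Lemma~\ref{lem:cumulative_var} and then choose $\alpha$ and $\gamma$ to balance the resulting error terms. First I would substitute the cumulative-error bound of Lemma~\ref{lem:cumulative_var} into the right-hand side of Lemma~\ref{lem:lem-SGD}. The telescoping term collapses: $\sum_{t=1}^T [P(\w_t)-P(\w_{t+1})] = P(\w_1)-P(\w_{T+1}) \leq \Delta_1$ by Assumption~\ref{ass:1}(a). This produces a master inequality of the form
\begin{equation*}
\frac{\alpha}{2}\E\left[\sum_{t=1}^T \|\nabla P(\w_t)\|^2\right] \leq \Delta_1 + \frac{\alpha C_1}{2}\left(\frac{n_+ V}{\gamma} + \gamma V T + \frac{2 n_+^2 \alpha^2 T C_3}{\gamma^2}\right) + \alpha^2 T C_2.
\end{equation*}

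Next I would divide both sides by $\alpha T/2$ to obtain an explicit upper bound on the averaged squared gradient norm as a function of $\alpha$, $\gamma$, $T$, and $n_+$:
\begin{equation*}
\E\left[\frac{1}{T}\sum_{t=1}^T \|\nabla P(\w_t)\|^2\right] \leq \frac{2\Delta_1}{\alpha T} + C_1\left(\frac{n_+ V}{\gamma T} + \gamma V + \frac{2 n_+^2 \alpha^2 C_3}{\gamma^2}\right) + 2\alpha C_2.
\end{equation*}
Then I would simply plug in $\alpha = n_+^{-2/5} T^{-3/5}$ and $\gamma = n_+^{2/5}\, T^{-2/5}$ and verify that each of the five terms is at most $O(n_+^{2/5}/T^{2/5})$. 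Concretely: $2\Delta_1/(\alpha T) = 2\Delta_1\, n_+^{2/5}/T^{2/5}$; $\gamma V = V n_+^{2/5}/T^{2/5}$; the cross term $2 n_+^2 \alpha^2 C_3/\gamma^2 = 2C_3\, n_+^{2/5}/T^{2/5}$; while $n_+ V/(\gamma T) = V n_+^{3/5}/T^{3/5}$ and $2\alpha C_2 = 2 C_2/(n_+^{2/5} T^{3/5})$ are strictly lower order under the hypothesis $T>n_+$. Summing the contributions gives the stated rate.

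The main substantive obstacle has already been handled upstream by Lemmas~\ref{lem:lem-SGD} and~\ref{lem:cumulative_var}; what remains is the balancing of four terms in the parameters $(\alpha,\gamma)$. The delicate point is the cross term $n_+^2 \alpha^2 / \gamma^2$, which couples $\alpha$ and $\gamma$ and forces a nontrivial scaling: matching this term against $\gamma$ requires $\alpha \asymp \gamma^{3/2}/n_+$, and matching both against $1/(\alpha T)$ then pins down the joint $T$- and $n_+$-dependence, yielding the $1/T^{2/5}$ exponent familiar from two-level stochastic compositional optimization, with an extra $n_+^{2/5}$ factor arising because the inner moving-average estimator must track $n_+$ distinct functions $g_{\x_i}$ while only one positive index is refreshed per iteration. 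Given the two lemmas, the remainder of the proof is bookkeeping and I would present it as a direct calculation.
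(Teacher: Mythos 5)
Your proposal matches the paper's proof essentially line for line: combine Lemma~\ref{lem:lem-SGD} with Lemma~\ref{lem:cumulative_var}, telescope the $P(\w_t)-P(\w_{t+1})$ terms to $\Delta_1$ via Assumption~\ref{ass:1}(a), divide by $\alpha T/2$, and substitute the stated $\alpha,\gamma$ so that the dominant terms $\frac{2\Delta_1}{\alpha T}$, $\gamma V$, and $\frac{n_+^2\alpha^2 C_3}{\gamma^2}$ all equal $O(n_+^{2/5}/T^{2/5})$ while the remaining two are lower order for $T>n_+$. The calculation and the balancing argument are both correct, so no further comment is needed.
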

\noindent
{\bf Remark:} To the best of our knowledge, this is the first time a stochastic algorithm was proved to converge for AP maximization. 


Similarly, we can establish the following convergence of SOAP by employing an Adam-style  update, specifically the AMSGrad update.
\begin{thm}\label{thm:3}
Suppose Assumption~\ref{ass:1} holds, let the parameters $\eta_1\leq \sqrt{\eta_2}\leq 1$, $\alpha = \frac{1}{n_+^{2/5}T^{3/5}}$,$\gamma = \frac{n_+^{2/5}}{T^{2/5}} $, $\forall\  t\in 1,\cdots, T$, and $T>n_+$. Then after running T iterations, SOAP with an AMSGRAD update satisfies
      $\E\left[ \frac{1}{T}\sum\limits_{t=1}^T\|\nabla P(\w_t)\|^2\right] \leq O(\frac{n_+^{2/5}}{T^{2/5}})$, 
where $O$ suppresses constant numbers.
\end{thm}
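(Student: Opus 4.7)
The plan is to mirror the SGD-style proof of Theorem 1, replacing the vanilla descent step by an AMSGrad-style descent inequality while reusing Lemma \ref{lem:bounded:function} (smoothness, bounded iterates of $\u$) and, with minor modifications, Lemma \ref{lem:cumulative_var} (cumulative moving-average error). The starting point is the $L$-smoothness of $P$ given by Lemma \ref{lem:bounded:function}, applied to the AMSGrad iterate $\w_{t+1} = \w_t - \alpha\,h_{t+1}/\sqrt{\epsilon + \hat v_{t+1}}$. Because Assumption \ref{ass:1}(b) together with the lower-clipping $u_0$ bounds $G(\w_t)$ deterministically, there exist constants $G_\infty, v_\infty > 0$ such that $\|G(\w_t)\|_\infty \le G_\infty$ and hence $\epsilon \le \epsilon + \hat v_{t+1} \le \epsilon + v_\infty$ coordinate-wise. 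This pinches the effective step size into a bounded range, which is the workhorse of any AMSGrad analysis.

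Next I would establish the AMSGrad analogue of Lemma \ref{lem:lem-SGD}. Writing the smoothness bound coordinate-wise and plugging in the update, I expect to obtain, after standard manipulations (e.g., the arguments of Reddi et al.\ or Chen et al.\ for non-convex AMSGrad adapted to biased gradients),
\begin{align*}
\tfrac{\alpha}{2\sqrt{\epsilon+v_\infty}}\E\!\left[\sum_{t=1}^T \|\nabla P(\w_t)\|^2\right]
\le{}& \E\!\left[\sum_t P(\w_t)-P(\w_{t+1})\right]
+ \tfrac{\alpha C_1'}{2}\E\!\left[\sum_{t=1}^T \|g_{i_t}(\w_t)-\u_{i_t}\|^2\right] \\
&{}+ \alpha C_2'\E\!\left[\sum_{t=1}^T \|h_{t+1}-\nabla P(\w_t)\|^2\right] + \alpha^2 T C_3',
\end{align*}
where the third term comes from the momentum lag in $h_{t+1}$ relative to the instantaneous (biased) gradient $G(\w_t)$, and the $\hat v$-telescoping absorbs the AMSGrad-specific monotonicity using $\hat v_{t+1}\ge \hat v_t$ and the bound $\hat v_T \preceq v_\infty\mathbf 1$. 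The momentum-lag term is standard: using $\eta_1\le \sqrt{\eta_2}$ and the $\|G(\w_t)\|$ bound, it can be bounded by an additional $O(\alpha T)$ term with a small coefficient, which is absorbed into $\alpha^2 TC_3'$ after re-tuning constants.

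The second step is to re-apply Lemma \ref{lem:cumulative_var}. The only place where the analysis of that lemma used the update rule was in bounding $\|\w_{t+1}-\w_t\|$, which for AMSGrad satisfies $\|\w_{t+1}-\w_t\|\le \alpha \|h_{t+1}\|/\sqrt{\epsilon} \le \alpha G_\infty \sqrt{D}/\sqrt{\epsilon}$, i.e., still $O(\alpha)$. Consequently the recursion for $\|g_{i_t}(\w_t)-\u_{i_t}\|^2$ and its grouping into $n_+$ per-positive-sample chains go through verbatim (with constants absorbing $1/\sqrt{\epsilon}$), producing the same bound $O(n_+V/\gamma + \gamma V T + n_+^2\alpha^2 T/\gamma^2)$.

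Finally, I would plug the modified Lemma \ref{lem:cumulative_var} into the AMSGrad descent inequality, telescope $P(\w_t)-P(\w_{t+1})$ using Assumption \ref{ass:1}(a), divide by $\alpha T$, and substitute the stated choices $\alpha = n_+^{-2/5}T^{-3/5}$ and $\gamma = n_+^{2/5}T^{-2/5}$; the dominant terms $\gamma V$, $1/(\alpha T) \cdot n_+V/\gamma$, and $\alpha n_+^2/\gamma^2$ all balance at $O(n_+^{2/5}/T^{2/5})$, giving the claimed rate. The main obstacle I anticipate is the careful bookkeeping of constants in the AMSGrad descent inequality in the presence of the biased compositional gradient $G(\w_t)$ — in particular, decomposing $\E[\langle \nabla P(\w_t), G(\w_t)\rangle]$ through the moving-average error $\|g_{i_t}(\w_t)-\u_{i_t}\|$ so that the bias appears exactly in the form controlled by Lemma \ref{lem:cumulative_var}, while the adaptive preconditioner $1/\sqrt{\epsilon+\hat v_{t+1}}$ is kept separable via its uniform upper/lower bounds.
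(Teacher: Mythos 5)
Your overall architecture matches the paper's: bound the per-step movement $\|\w_{t+1}-\w_t\|$ for AMSGrad by $O(\alpha)$ using the pinched preconditioner (this is exactly the paper's Lemma~\ref{lem:update-Adam-B=D}, proved via Cauchy--Schwarz relating $|h_{t+1}^l|^2$ to $\hat v_{t+1}^l$), observe that Lemma~\ref{lem:cumulative_var} then goes through unchanged, decompose the bias through $\|g_{i_t}(\w_t)-\u_{i_t}\|$, telescope the preconditioner change using $\hat v_{t+1}\succeq \hat v_t$, and tune $\alpha,\gamma$ as in the SGD case. All of that is sound and is what the paper does.

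The genuine gap is in your treatment of the momentum term. You introduce $\alpha C_2'\,\E\bigl[\sum_{t=1}^T\|h_{t+1}-\nabla P(\w_t)\|^2\bigr]$ and claim it is $O(\alpha T)$ ``with a small coefficient'' absorbable into $\alpha^2 T C_3'$. With $\eta_1$ a \emph{fixed} constant (which is all the theorem assumes), this fails: $h_{t+1}$ is an exponential average of the stochastic estimators $G(\w_p)$, whose variance does not vanish, so in steady state $\E\|h_{t+1}-\nabla P(\w_t)\|^2$ has a floor of order $(1-\eta_1)\sigma^2$ --- a constant, not $O(\alpha)$. Summing gives $\Theta(T)$, and after dividing the descent inequality by $\alpha T$ this term contributes a non-vanishing constant to the bound on $\frac{1}{T}\sum_t\E\|\nabla P(\w_t)\|^2$, so no convergence rate follows. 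The bound $\eta_1\le\sqrt{\eta_2}$ and the boundedness of $G$ cannot rescue this. The paper avoids the issue entirely by never comparing $h_{t+1}$ to $\nabla P(\w_t)$ in squared norm: it introduces the Lyapunov function $\V_t = P(\w_t)-c_t\langle\nabla P(\w_{t-1}),D_t h_t\rangle$ with $c_t$ chosen so that $\eta_1(\alpha+c_{t+1})=c_t$, which makes the momentum part of the inner product at step $t$ telescope into the same decomposition at step $t-1$ plus a genuinely $O(\alpha^2 d)$ remainder coming from $\|\w_t-\w_{t-1}\|^2$. You would need this (or an equivalent auxiliary-sequence argument) to close the proof; as written, your descent inequality does not yield the claimed rate.
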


\vspace*{-0.1in}
\section{Experiments}
\label{experiments}

In this section, we evaluate the proposed method through comprehensive experiments on imbalanced datasets. We show that the proposed method can outperform prior state-of-the-art methods for imbalanced classification problems. In addition, we conduct experiments on (i) the effects of imbalance ratio; (ii) the insensitivity to batch size and (iii) the convergence speed on testing data; and observe that our method (i) is more advantageous when data is more imbalanced, (ii) is not sensitive to batch size, and (iii) converges faster than baseline methods. 

Our proposed  optimization algorithm is independent of specific datasets and tasks. Therefore, we perform experiments on both graph and image prediction tasks. In particular, the graph prediction tasks in the contexts of molecular property prediction and drug discovery suffer from very severe imbalance problems as positive labels are very rare while negative samples are abundantly available. Thus, we choose to use graph data intensively in our experiments. Additionally, the graph data we use allow us to vary the imbalance ratio to observe the performance change of different methods. 

In all experiments, we compare our method with the following baseline methods. 
\textbf{CB-CE} refers to a method using a class-balanced weighed cross entropy loss function, in which the weights for positive and negative samples are adjusted with the strategy proposed by \citet{cui2019class}. \textbf{Focal} is to up-weight the penalty on hard examples using focal loss \cite{lin2017focal}. \textbf{LDAM} refers to training with label-distribution-aware margin loss \cite{cao2019learning}. \textbf{AUC-M} is an AUROC maximization method using a surrogate loss~\cite{yuan2020robust}. In addition, we compare with three methods for optimizing AUPRC or AP, namely,  the \textbf{MinMax} method~\cite{aistats17} - a method for optimizing a discrete approximation of AUPRC, \textbf{SmoothAP} ~\cite{brown2020smooth} - a method that optimizes a smoothed approximation of AP,  and \textbf{FastAP} - a method that uses soft histogram binning to approximate the gradient of  AP~\cite{Cakir_2019_CVPR}. For all of these methods, we use the SGD-style with momentum optimization for image prediction tasks and the Adam-style optimization algorithms for graph prediction tasks and unless specified otherwise. We refer to \textbf{imbalance ratio} as  the number of positive samples over the total number of examples of a considered set. The hyper-parameters of all methods   are   fine   tuned   using   cross-validation   with   training/validation splits mentioned below. For AP maximization methods, we use a sigmoid function to produce the prediction score. For simplicity, we set $u_0=0$ for SOAP and encounter no numerical problems in experiments. 
{As SOAP requires positive samples for updating $\u$ to approximate the  gradient of surrogate objective, we use a data sampler which samples a few positive examples (e.g., 2) and some negative examples per iteration. The same sampler applies to all methods for fair comparison.} The code for reproducing the results is released here~\cite{code}.

\begin{table}[t]
    \caption{The test AUPRC on the image datasets with two ResNet models. We report the average AUPRC and standard deviation (within brackets) over 5 runs.}
    \label{tab:img}
    \begin{center}
  \resizebox{\textwidth}{!}{  
        \begin{small}
                \begin{tabular}{l|cc|cc}
                    \toprule
                    Datasets  &\multicolumn{2}{c|}{CIFAR-10} &\multicolumn{2}{c}{CIFAR-100} \\
                    \hline
                    Networks   &ResNet18 & ResNet34 &ResNet18 & ResNet34 \\
                    \midrule
                    CE &0.7155 ($\pm$ 0.0058) & 0.6844($\pm$ 0.0031)& 0.5946 ($\pm$ 0.0031)  &  0.5792 ($\pm$ 0.0028)  \\
                    CB-CE  & 0.7325 ($\pm$ 0.0039) & 0.6936($\pm$0.0021) & 0.6165 ($\pm$ 0.0096)& 0.5632($\pm$ 0.0129)\\
                    Focal  &0.7183($\pm$ 0.0082)& 0.6943($\pm$ 0.0007) & 0.6107($\pm$ 0.0093)& 0.5585($\pm$ 0.0285) \\
                    LDAM  & 0.7346 ($\pm$ 0.0125) &0.6745($\pm$ 0.0043)  & 0.6153 ($\pm$ 0.0100)& 0.5662($\pm$ 0.0212)\\
                    AUC-M  &0.7399($\pm$ 0.0013) &0.6825($\pm$ 0.0089)  & 0.6103 ($\pm$ 0.0075)& 0.5306($\pm$ 0.0230)\\
                SmoothAP  &0.7365 ($\pm$ 0.0088) &0.6909 ($\pm$ 0.0049) &0.6071($\pm$ 0.0143) &0.5208 ($\pm$ 0.0505) \\
                FastAP &0.7028 ($\pm$ 0.0341)& 0.6798 ($\pm$ 0.0032)& 0.5618($\pm$ 0.0351) &0.5151($\pm$ 0.0450) \\
                MinMax &0.7228 ($\pm$ 0.0118)  & 0.6806($\pm$ 0.0027) &0.6071($\pm$ 0.0064) & 0.5518($\pm$ 0.0030)\\ 
                SOAP   &  \textbf{0.7629}($\pm$ 0.0014) & \textbf{0.7012}($\pm$ 0.0056) & \textbf{0.6251} ($\pm$ 0.0053)& \textbf{0.6001}($\pm$ 0.0060)\\
                    \bottomrule
                \end{tabular}
        \end{small}
        }
    \end{center}
\end{table}



\subsection{Image Classification}
\label{sec:img-experiments}
\textbf{Data.} 
We first conduct experiments on three image datasets:  CIFAR10, CIFAR100 and Melanoma dataset~\cite{rotemberg2020patient}.
We construct imbalanced version of CIFAR10 and CIFAR100 for binary  classification. In particular, for each dataset we manually take the last half of classes as positive class and first half of classes as negative class. To construct highly imbalanced data, we remove 98\% of the positive images from the training data and keep the test data unchanged (i.e., the testing data is still balanced). And we split the training dataset into train/validation set at 80\%/20\% ratio. The Melanoma dataset is from a medical image Kaggle competition, which serves as a natural real imbalanced image dataset.
It contains 33,126 labeled medical images, among which 584 images are related to malignant melanoma and labelled as positive samples. Since  the test set used by Kaggle organization is not available, we manually split the training data into train/validation/test set at 80\%/10\%/10\% ratio and report the achieved AUPRC on the test set by our method and baselines. The images of Melanoma dataset are always resized to have a resolution of $384\times 384$ in our experiments.

\noindent\textbf{Setup.} We use two ResNet \cite{he2016deep} models, \emph{i.e.}, ResNet18 and ResNet34, as the backbone networks for image classification. For all methods except for CE, the ResNet models are initialized with a  model pre-trained by CE with a SGD optimizer. We tune the learning rate in a range $\{$1e-5, 1e-4, 1e-3, 1e-2$\}$ and the weight decay parameter in a range $\{$1e-6, 1e-5, 1e-4$\}$.  
Then the last fully connected layer is randomly re-initialized and the network is trained by different methods  with the same weight decay parameter but other hyper-parameters individually tuned for fair comparison, e.g., we tune $\gamma$ of SOAP in a range $\{$0.9, 0.99,0.999$\}$, and tune $m$ in $\{$0.5, 1, 2, 5, 10$\}$. We refer to this scheme as two-stage training, which is widely used for imbalanced data~\cite{yuan2020robust}.  
We consistently observe that this strategy can bring the model to a good initialization state and improve the final performance of our method and baselines. 

\noindent\textbf{Results.} Table~\ref{tab:img} shows the AUPRC on testing sets of CIFAR-10 and CIFAR-100. We report the results on Melanoma in Table~\ref{tab:mit}. We can observe that the proposed method SOAP outperforms all baselines.  It is also striking to see that  on Melanoma dataset, our proposed SOAP can outperform all baselines by a large margin, and all other methods have very poor performance. The reason is that the testing set of Melanoma is also imbalanced (imbalanced ratio=1.72\%), while the testing sets of CIFAR-10 and CIFAR-100 are balanced. We also observe that the AUROC maximization (AUC-M) does not necessarily optimize AUPRC. We also plot the final PR curves in Figure~\ref{fig:precision_recall_curves} in the supplement.




\vspace*{-0.05in}
\subsection{Graph Classification for Molecular Property Prediction}
\label{sec:exp_prop}
\vspace*{-0.05in}
\textbf{Data.} To further demonstrate the advantages of our method, we conduct experiments on two graph classification datasets. We use the datasets HIV and MUV from the MoleculeNet \cite{wu2018moleculenet}, which is a benchmark for molecular property prediction. The HIV dataset has 41,913 molecules from the Drug Therapeutics Program (DTP), and the positive samples are molecules tested to have inhibition ability to HIV. The MUV dataset has 93,127 molecules from the PubChem library, and molecules are labelled by whether a bioassay property exists or not. Note that the MUV dataset provides labels of 17 properties in total and we only conduct experiments to predict the third property as this property is more imbalanced. The percentage of positive samples in HIV and MUV datasets are 3.51\% and 0.20\%, respectively. We use the split of train/validation/test set provided by MoleculeNet. Molecules are treated as 2D graphs in our experiments, and we use the feature extraction procedure of MoleculeKit \cite{wang2021advanced} to obtain node features of graphs. The same data preprocessing is used for all of our experiments on graph data.

\noindent\textbf{Setup.} Many recent studies have shown that graph neural networks (GNNs) are powerful models for graph data analysis \cite{kipf2017semi,gao18large,gao19graph}. Hence, we use three different GNNs as the backbone network for graph classification, including the message passing neural network (\textbf{MPNN}) \cite{gilmer2017mpnn}, an invariant of graph isomorphism network \cite{xu2018how} named by \textbf{GINE} \cite{hu2019strategies}, and the multi-level message passing neural network (\textbf{ML-MPNN}) proposed by \citet{wang2021advanced}. 
 We use the same two-stage training scheme with a similar hyper-parameter tuning. We pre-train the networks by Adam with 100 epochs and a tuned initial learning rate  0.0005, which is decayed by half after  50 epochs.

\noindent
\textbf{Results.} The achieved AUPRC on the test set by all methods are presented in Table~\ref{tab:hiv_muv}. Results show that our method can outperform all baselines by a large margin in terms of AUPRC, regardless of which model structure is used. These results clearly demonstrate that our method is effective for classification problems in which the sample distribution is highly imbalanced between classes.

\vspace*{-0.1in}
\subsection{Graph Classification for Drug Discovery}
\vspace*{-0.1in}


\begin{table}[t]
    \caption{The test AUPRC values on the HIV and MUV datasets with three graph neural network models. We report the average AUPRC and standard deviation (within brackets) over 3 runs.}
    \label{tab:hiv_muv}
    \begin{center}
        \begin{small}
            \begin{tabular}{clccc}
                \toprule
                Dataset & Method & GINE & MPNN & ML-MPNN \\
                \midrule
                \multirow{7}{*}{HIV} & CE & 0.2774 ($\pm$ 0.0101) & 0.3197 ($\pm$ 0.0050) & 0.2988 ($\pm$ 0.0076)\\
                & CB-CE & 0.3082 ($\pm$ 0.0101) & 0.3056 ($\pm$ 0.0018) & 0.3291 ($\pm$ 0.0189)\\
                & Focal & 0.3179 ($\pm$ 0.0068) & 0.3136 ($\pm$ 0.0197) & 0.3279 ($\pm$ 0.0173) \\
                & LDAM & 0.2904 ($\pm$ 0.0008) & 0.2994 ($\pm$ 0.0128) & 0.3044 ($\pm$ 0.0116) \\
                & AUC-M & 0.2998 ($\pm$ 0.0010) & 0.2786 ($\pm$ 0.0456) & 0.3305 ($\pm$ 0.0165) \\
                & SmothAP & 0.2686 ($\pm$ 0.0007) & 0.3276 ($\pm$ 0.0063)& 0.3235 ($\pm$ 0.0092)\\
                & FastAP & 0.0169 ($\pm$ 0.0031) & 0.0826 ($\pm$ 0.0112) & 0.0202 ($\pm$ 0.0002)\\
                & MinMax& 0.2874 ($\pm$ 0.0073) & 0.3119 ($\pm$ 0.0075) & 0.3098 ($\pm$ 0.0167) \\
                & SOAP  & \textbf{0.3385 ($\pm$ 0.0024)} & \textbf{0.3401 ($\pm$ 0.0045)} & \textbf{0.3547 ($\pm$ 0.0077)}\\
                \midrule
        \multirow{7}{*}{MUV}   & CE       & 0.0017 ($\pm$0.0001)                         & 0.0021 ($\pm$0.0002)                         & 0.0025 ($\pm$0.0004)     \\
             & CB-CE    & 0.0055 ($\pm$0.0011)                         & 0.0483 ($\pm$0.0083)                         & 0.0121 ($\pm$0.0016)                    \\
                      & Focal    & 0.0041 ($\pm$0.0007)                         & 0.0281 ($\pm$0.0141)                         & 0.0122 ($\pm$0.0001)                        \\
                      & LDAM     & 0.0044 ($\pm$0.0022)                         & 0.0118 ($\pm$0.0098)                         & 0.0059 ($\pm$0.0021)                     \\
                      & AUC-M    & 0.0026 ($\pm$0.0001)                          & 0.0040 ($\pm$0.0012)                         & 0.0028 ($\pm$0.0012)                     \\
                      & SmoothAP & 0.0073 ($\pm$0.0012)                         & 0.0068 ($\pm$0.0038)                         & 0.0029 ($\pm$0.0005)                         \\
             & FastAP   & 0.0016 ($\pm$0.0000) & 0.0023 ($\pm$0.0021) & 0.0022 ($\pm$0.0012) \\
                      & MinMax   & 0.0028 ($\pm$0.0008)                         & 0.0027 ($\pm$0.0005)                         & 0.0043 ($\pm$0.0015)                        \\
& SOAP     & \textbf{0.0254 ($\pm$0.0261)}                         & \textbf{0.3352} ($\pm$0.0008)                         & \textbf{0.0236} ($\pm$0.0038)    \\
                \bottomrule
            \end{tabular}
        \end{small}
    \end{center}
\end{table}

\textbf{Data.} In addition to molecular property prediction, we explore applying our method to drug discovery. Recent studies have shown that GNNs are effective in drug discovery through predicting the antibacterial property of chemical compounds \cite{stokes2020deep}. Such application scenarios involves training a GNN model on labeled datasets and making predictions on a large library of chemical compounds so as to discover new antibiotic. However, because the positive samples in the training data, \emph{i.e.}, compounds known to have antibacterial property, are very rare, there exists very severe class imbalance.

We show that our method can serve as a useful solution to the above problem. We conduct experiments on the MIT AICURES dataset from an open challenge (\url{https://www.aicures.mit.edu/tasks}) in drug discovery. The dataset consists of 2097 molecules. There are 48 positive samples that have antibacterial activity to \textit{Pseudomonas aeruginosa}, which is the pathogen leading to secondary lungs infections of COVID-19 patients. We conduct experiments on three random train/validation/test splits at 80\%/10\%/10\% ratio, and report the average AUPRC on the test set over three splits.

\begin{table}[!t]
    \caption{The test AUPRC values on the MIT AICURES dataset with two graph neural networks, and on the Kaggle Melanoma dataset with two CNN models. We report the average AUPRC and standard deviation (within brackets) from 3 independent runs over 3 different train/validation/test splits.}
    \label{tab:mit}
    \begin{center}
        \begin{small}
            \begin{tabular}{lcc|cc}
                \toprule
                Data & \multicolumn{2}{c|}{MIT AICURES} & \multicolumn{2}{c}{Kaggle Melanoma}\\
                \midrule
                Networks & GINE & MPNN  &  ResNet18 & ResNet34\\
                \midrule
                CE & 0.5037 ($\pm$ 0.0718) & 0.6282 ($\pm$ 0.0634) &  0.0701 ($\pm$ 0.0031) & 0.0582 ($\pm$ 0.0016) \\
                CB-CE & 0.5655 ($\pm$ 0.0453) & 0.6308 ($\pm$ 0.0263) &  0.0631 ($\pm$ 0.0065) & 0.0721 ($\pm$ 0.0054)\\
                Focal & 0.5143 ($\pm$ 0.1062) & 0.5875 ($\pm$ 0.0774)& 0.0549 ($\pm$ 0.0083) & 0.0663 ($\pm$ 0.0034)\\ 
                LDAM & 0.5236 ($\pm$ 0.0551) & 0.6489 ($\pm$ 0.0556) &  0.0547 ($\pm$ 0.0046) & 0.0539 ($\pm$ 0.0069) \\ 
                AUC-M & 0.5149 ($\pm$ 0.0748) & 0.5542 ($\pm$ 0.0474) & 0.1013 ($\pm$ 0.0071) & 0.0972 ($\pm$ 0.0035)\\ 
                 SmothAP & 0.2899 ($\pm$ 0.0220) & 0.4081 ($\pm$ 0.0352) &  0.1981 ($\pm$ 0.0527) & 0.2787 ($\pm$ 0.0232)\\ 
                 FastAP & 0.4777 ($\pm$ 0.0896) & 0.4518 ($\pm$ 0.1495) & 0.0324 ($\pm$ 0.0087) & 0.0359 ($\pm$ 0.0062) \\ 
                MinMax  & 0.5292 ($\pm$ 0.0330) & 0.5774 ($\pm$ 0.0468) &  0.0593 ($\pm$ 0.0037) & 0.0663 ($\pm$ 0.0084)\\ 
                SOAP & \textbf{0.6639 ($\pm$ 0.0515) } & \textbf{0.6547 ($\pm$ 0.0616) } & \textbf{0.2624 ($\pm$ 0.0410)} & \textbf{0.3152 ($\pm$ 0.0337)}\\
                \bottomrule
            \end{tabular}
        \end{small}
         \end{center}
\end{table}
\noindent\textbf{Setup.} Following the setup in Sec.~\ref{sec:exp_prop}, we use three GNNs: MPNN, GINE and ML-MPNN. We use the same two-stage training scheme with a similar hyper-parameter tuning. We pre-train GNNs by the Adam method for 100 epochs with a batch size of 64 and a tuned learning rate of 0.0005, which is decayed  by half at the 50th epoch. 
Due to the limit of space, Table~\ref{tab:mit}
only reports GINE and MPNN results. Please refer to Table~\ref{tab:mit-full} in the supplement for the full results of all three GNNs.

\noindent\textbf{Results.} The average test AUPRC from three independent runs over three splits are summarized in Table~\ref{tab:mit}, Table~\ref{tab:mit-full}. We can see that our SOAP can consistently outperform all baselines on all three GNN models. Our proposed optimization method can significantly improve the achieved AUPRC of GNN models, indicating that models tend to assign higher confidence scores to molecules with antibacterial activity. This can help identify a larger number of candidate drugs. 

We have employed the proposed AUPRC maximization method for improving the testing performance on MIT AICures Challenge and achieved the 1st place. For details, please refer to~\cite{wang2021advanced}.

\subsection{Ablation Studies}
\vspace*{-0.05in}

\begin{figure}[t]
   \includegraphics[width=0.33\linewidth]{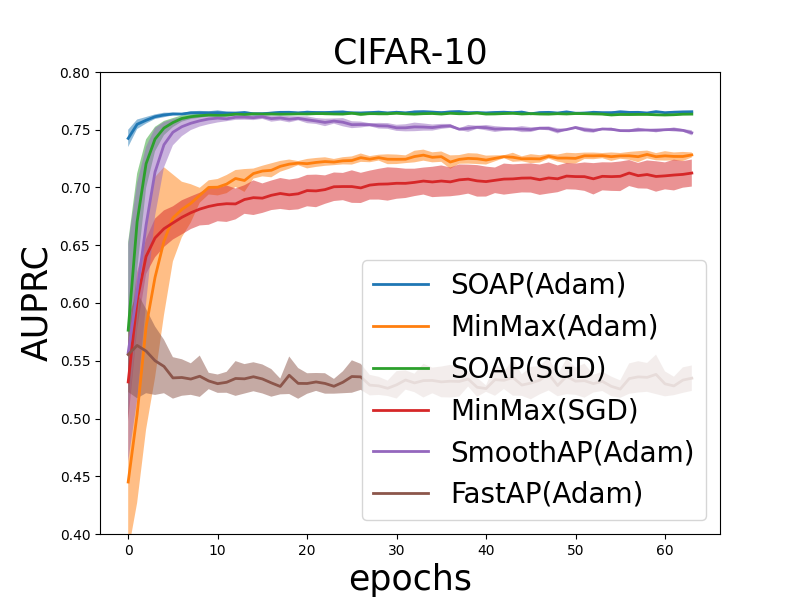}
    \label{fig:sub-second}
      \includegraphics[width=0.33\linewidth]{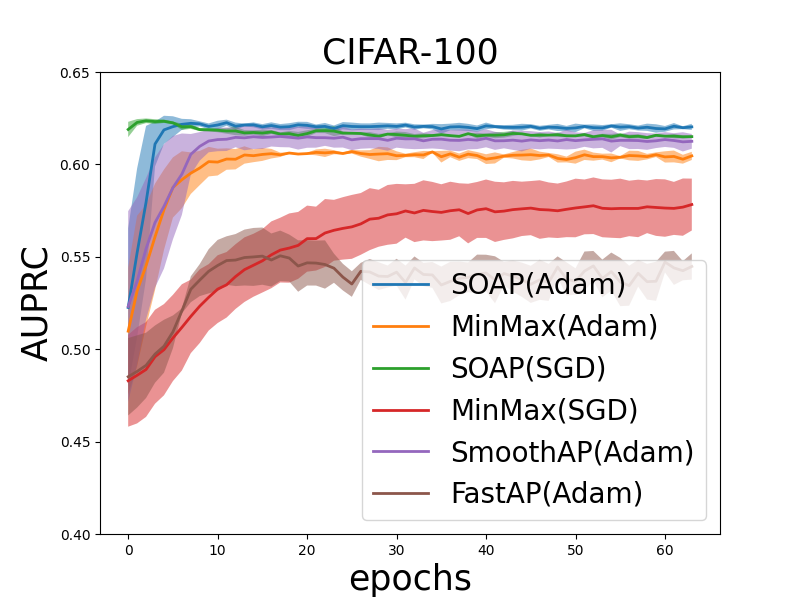}
   \label{fig:sub-third}
     \includegraphics[width=0.33\linewidth]{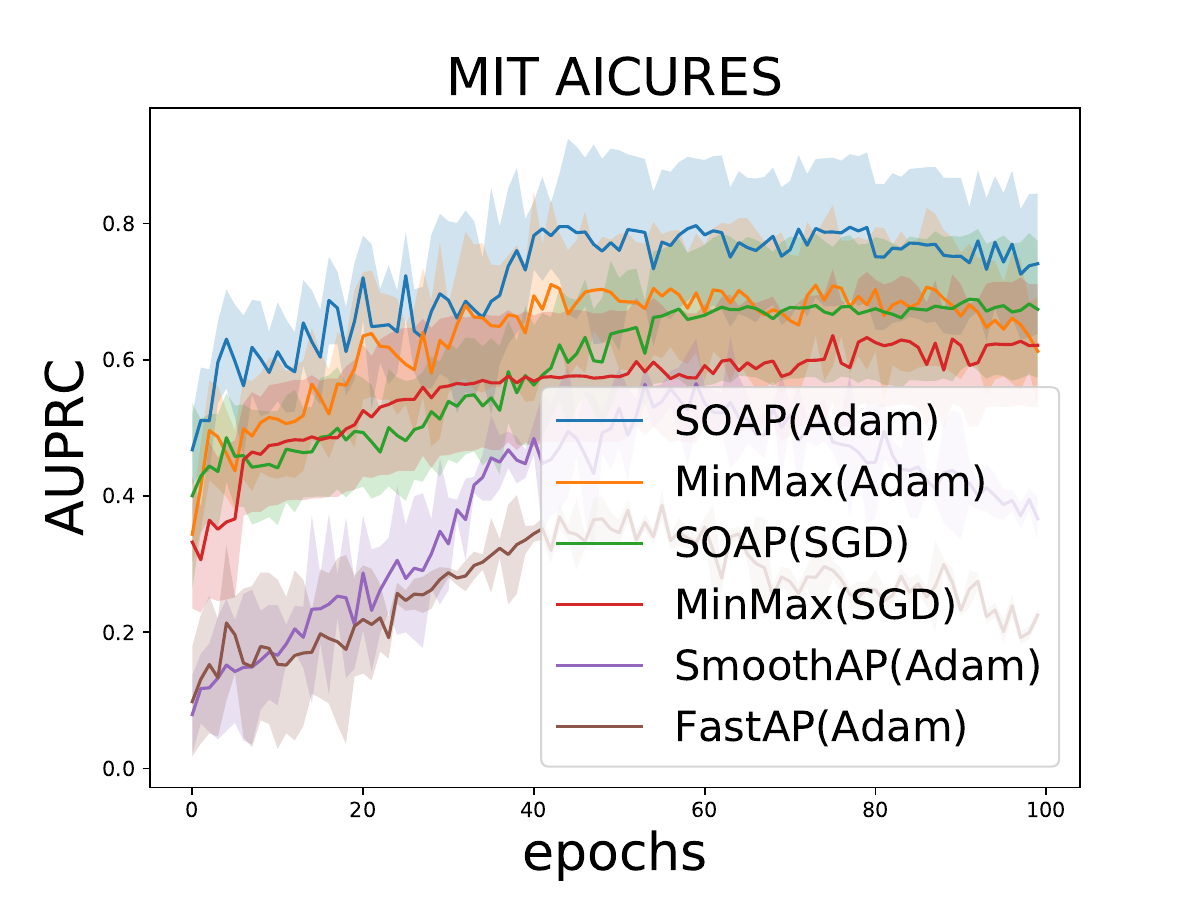}
  \label{fig:sub-first}
\vspace*{-0.1in}
  \caption{
  Comparison of convergence of different methods in terms of test AUPRC scores on  CIFAR-10,  CIFAR100 and MIT AICURES data. }
 \label{fig:conv}
\end{figure}

\begin{figure}[t]
      \includegraphics[width=0.33\linewidth]{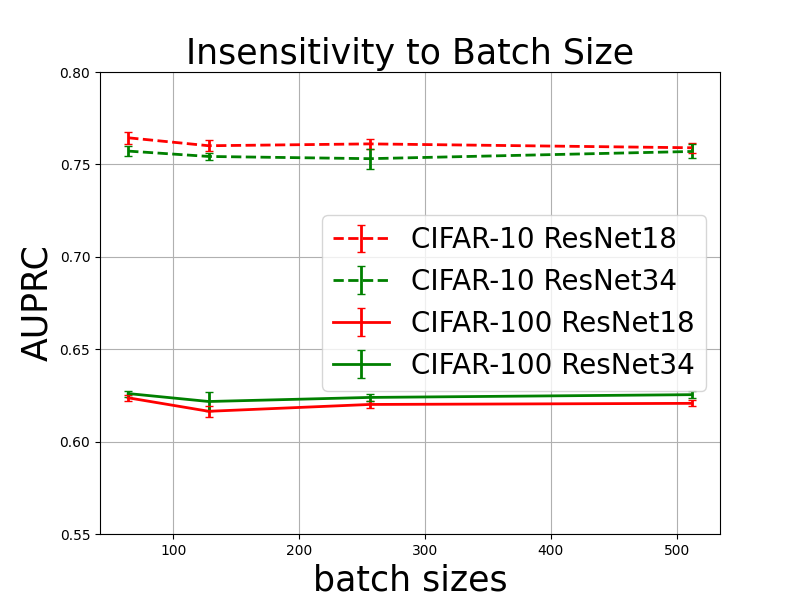}
      \label{fig:sub-fourth}
  \includegraphics[width=0.33\linewidth]{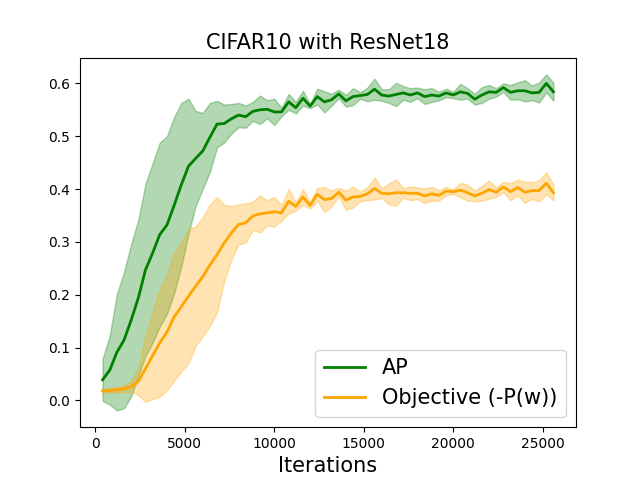}
  \label{fig:sub-fifth}
   \includegraphics[width=0.33\linewidth]{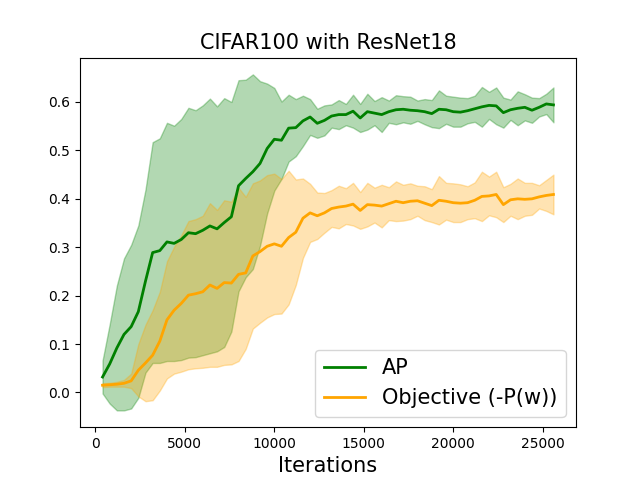}
    \label{fig:sub-sixtg}
\vspace*{-0.1in}
  \caption{
  Left most: insensitivity to batch size of SOAP. Right two: consistency between AP and Surrogate Objective -$P(\w)$ vs Iterations on CIFAR10 and CIFAR100. }
\vspace*{-0.15in}
 \label{fig:consistency}
\end{figure}

\textbf{Effects of Imbalance Ratio.}
We now study the effects of imbalance ratio on the performance improvements of our method. We use two datasets Tox21 and ToxCast from the MoleculeNet \cite{wu2018moleculenet}. The Tox21 and ToxCast contain 8014 and 8589 molecules, respectively. There are 12 property prediction tasks in Tox21, and we conduct experiments on Task 0 
and Task 2. 
Similarly, we select Task 12 
and Task 8 
of ToxCast for experiments. We use the split of train/validation/test set provided by MoleculeNet. The imbalanced ratios on the training sets are 4.14\% for Task 0 of Tox21, 12.00\% for Task 2 of Tox21, 2.97\% for Task 12 of ToxCast, 8.67\% for Task 8 of ToxCast.  

Following Sec.~\ref{sec:exp_prop}, we test three neural network models MPNN, GINE and ML-MPNN. The hyper-parameters for training models are also the same as those in Sec.~\ref{sec:exp_prop}.
We present the results of Tox21 and ToxCast in Table~\ref{tab:tox} in the supplement. Our SOAP can consistently achieve improved performance when the data is extremely imbalanced. However, it sometimes fails to do so if the imbalance ratio is not too low. Clearly, the improvements from our method are higher when the imbalance ratio of labels is lower. In other words, our method is more advantageous for data with extreme class imbalance.

\textbf{Insensitivity to Batch Size.} We conduct experiments on CIFAR-10 and CIFAR-100 data by varying the mini-batch size for the SOAP algorithm and report results in Figure~\ref{fig:consistency} (Left most). We can see that SOAP is not sensitive to the mini-batch size. This is consistent with our theory. In contrast, many previous methods for AP maximization are sensitive to the mini-batch size~\cite{qin2008a,Rolinek_2020_CVPR,Cakir_2019_CVPR}. 

\textbf{Convergence Speed.} We report the convergence curves of different methods for maximizing AUPRC or AP in Figure~\ref{fig:conv} on different datasets. We can see that the proposed SOAP algorithms converge much faster than other baseline methods. 

\textbf{More Surrogate Losses.}
To verify the generality of SOAP, we evaluate the performance of SOAP with two more different surrogate loss functions $\ell(\w; \x_s, \x_i)$ as a surrogate function of the indicator $\I(h_\w(\x_s)\geq h_\w(\x_i))$, namely, the logistic loss, $\ell(\w; \x_s, \x_i) = -\log \frac{1}{1+\exp(-c(\ell(h_{\w}(\x_i)-h_{\w}(\x_s)))}$, and the sigmoid loss, $\ell(\w; \x_s, \x_i) =  \frac{1}{1+\exp(c(\ell(h_{\w}(\x_i)-h_{\w}(\x_s)))}$ where  $c$ is a hyperparameter.  We tune $c\in\{1, 2\}$ in our experiments. We conduct experiments on CIFAR10, CIFAR100 following the experimental  setting in Section~\ref{sec:img-experiments} for the image data. For the graph data, we conduct experiments on HIV, MUV data following the experimental setting in Section~\ref{sec:exp_prop}. We report the results in Table~\ref{tab:img-surr}. We can observe that SOAP has similar results with different surrogate loss functions.

\textbf{Consistency.} Finally, we show the consistency between the Surrogate Objective -$P(\w)$ and AP by plotting the convergence curves on different datasets in Figure~\ref{fig:consistency} (Right two). It is obvious two see the consistency between our surrogate objective and the true AP.

\begin{table}[t]
\centering
\caption{The test AUPRC over 3 independent runs by SOAP with different surrogate functions. }
\label{tab:img-surr}
\resizebox{0.97\textwidth}{!}{%
\begin{tabular}{l|cc|cc}
\bottomrule
            Data  & \multicolumn{2}{c|}{CIFAR10}                             & \multicolumn{2}{c}{CIFAR100}                                \\ 
            \hline
              Networks    & \multicolumn{1}{c}{ResNet18} & \multicolumn{1}{c|}{ResNet34} & \multicolumn{1}{c}{ResNet18} & \multicolumn{1}{c}{ResNet34} \\\hline
Squared Hinge & 0.7629 ($\pm$0.0014)               & 0.7012 ($\pm$0.0056)              & 0.6251 ($\pm$0.0053)              & 0.6001 ($\pm$0.0060)              \\
Logistic      & 0.7542 ($\pm$0.0024)              & 0.6968 ($\pm$0.0121)              & 0.6378 ($\pm$0.0031)               & 0.5923 ($\pm$0.0101)              \\
Sigmoid      & 0.7652 ($\pm$0.0035)               & 0.6983 ($\pm$0.0084)               & 0.6271 ($\pm$0.0043)              & 0.5832 ($\pm$0.0054)       \\
\bottomrule
\bottomrule
             Data & \multicolumn{2}{c|}{HIV}                             & \multicolumn{2}{c}{MUV}                                                                    \\ \hline
              Networks& \multicolumn{1}{c}{GINE} & \multicolumn{1}{c|}{MPNN} & \multicolumn{1}{c}{GINE} & \multicolumn{1}{c}{MPNN}                                        \\\hline
Squared Hinge & 0.3485 ($\pm$0.0083)   & 0.3401 ($\pm$0.0045)   & 0.0354 ($\pm$0.0025)  & 0.3365 ($\pm$0.0008)                                           \\
Logistic      & 0.3436 ($\pm$0.0043)    & 0.3617 ($\pm$0.0031)     & 0.0493 ($\pm$0.0261)           & 0.3352 ($\pm$0.0008)                                            \\
Sigmoid      & 0.3387 ($\pm$0.0051)                          & 0.3629 ($\pm$0.0063)     &               0.0298 ($\pm$0.0043)             & 0.3362 ($\pm$0.0009)\\ \bottomrule
\end{tabular}
}
\end{table}

\vspace*{-0.1in}
\section{Conclusions and Outlook}
\vspace*{-0.1in}
In this work, we have proposed a stochastic method to optimize AUPRC that can be used in deep learning for tackling highly imbalanced data.
Our approach is based on maximizing the averaged precision, and we cast the objective into a sum of coupled compositional functions. We proposed efficient adaptive and non-adaptive stochastic algorithms with provable convergence guarantee to compute the solutions. Extensive experimental results on graph and image datasets demonstrate that our proposed method can achieve promising results, especially when the class distribution is highly imbalanced. One limitation of SOAP is its convergence rate is still slow.  In the future, we will consider to improve the convergence rate to address the limitation of the present work.

\section*{Acknowledgments}
We thank Bokun Wang for discussing the proofs, and thank anonymous reviewers for constructive comments. Q.Q contributed to the algorithm design, analysis, and experiments  under supervision of T.Y.  Y.L and Z.X contributed to the experiments under supervision of S.J.  Q.Q and T.Y were partially supported by NSF Career Award \#1844403,  NSF Award \#2110545 and NSF Award \#1933212.  Y.L, Z.X and S.J were partially supported by NSF IIS-1955189.

\bibliographystyle{icml2021}
\bibliography{Reference4,Reference2,Reference3}

\newpage

\clearpage
\appendix

\section{Additional Experimental Results}
We include the results about effect of imbalance ratio in Table~\ref{tab:tox}, and the full results using three networks on MIT AICURES data in Table~\ref{tab:mit-full}, and PR curves of final models on CIFAR10, CIFAR100 data in Figure~\ref{fig:precision_recall_curves}.

\begin{table*}[h]
    \caption{Test AUPRC on task 0 and task 2 of the Tox21 dataset and task 12 and task 8 of the ToxCast dataset with three graph neural network models.}
    \label{tab:tox}
   \vspace*{-0.1in}
    \begin{center}
        \begin{small}
            \begin{tabular}{lccc}
                \toprule
                 & \multicolumn{3}{c}{Tox21 Task 0 (Imbalance Ratio = 4.14\%)} \\
                \midrule
                Method & GINE & MPNN & ML-MPNN \\
                \midrule
                CE & 0.4829 ($\pm$ 0.0123) & 0.5002 ($\pm$ 0.0054) & 0.4868 ($\pm$ 0.0048) \\
                CB-CE & 0.4861 ($\pm$ 0.0113) & 0.4931 ($\pm$ 0.0068) & 0.4772 ($\pm$ 0.0033) \\
                Focal & 0.4874 ($\pm$ 0.0148) & 0.4865 ($\pm$ 0.0067) & 0.4769 ($\pm$ 0.0134) \\
                LDAM & 0.5093 ($\pm$ 0.0096) & 0.4823 ($\pm$ 0.0084) & 0.4709 ($\pm$ 0.0084) \\
                AUC-M & 0.4356 ($\pm$ 0.0127) & 0.4428 ($\pm$ 0.0121) & 0.4632 ($\pm$ 0.0121) \\
                SmoothAP & 0.3764 ($\pm$ 0.0053) & 0.4504 ($\pm$ 0.0089) & 0.4634 ($\pm$ 0.0064) \\
                FastsAP & 0.0668 ($\pm$ 0.0061) & 0.2358 ($\pm$ 0.0093) & 0.0341 (0.0065) \\
                MinMax (Adam) & 0.5066 ($\pm$ 0.0111) & 0.4940 ($\pm$ 0.0134) & 0.4947 ($\pm$ 0.0053) \\
                SOAP (Adam) & \textbf{0.5276 ($\pm$ 0.0099)} & \textbf{0.5211 ($\pm$ 0.0089)} & \textbf{0.5093 ($\pm$ 0.0067)} \\
                \midrule
                & \multicolumn{3}{c}{Tox21 Task 2 (Imbalance Ratio = 12.00\%)} \\
                \midrule
                Method & GINE & MPNN & ML-MPNN \\
                CE & 0.5918 ($\pm$ 0.0063) & 0.6023 ($\pm$ 0.0087) & 0.5796 ($\pm$ 0.0071) \\
                CB-CE & 0.5538 ($\pm$ 0.0087) & 0.5811 ($\pm$ 0.0095) & 0.5855 ($\pm$ 0.0069) \\
                Focal & 0.5594 ($\pm$ 0.0069) & 0.6018 ($\pm$ 0.0083) & 0.5555 ($\pm$ 0.0025) \\
                LDAM & 0.5369 ($\pm$ 0.0065) & 0.5991 ($\pm$ 0.0067) & 0.6014 ($\pm$ 0.0051) \\
                AUC-M & 0.5832 ($\pm$ 0.0067) & 0.6117 ($\pm$ 0.0085) & 0.5987 ($\pm$ 0.0060) \\
                SmoothAP & 0.5852 ($\pm$ 0.0045) & 0.6210 ($\pm$ 0.0069) & 0.4858 ($\pm$ 0.0061) \\
                FastAP & 0.5605 ($\pm$ 0.0000) & 0.5605 ($\pm$ 0.0000) & 0.5605 ($\pm$ 0.0000) \\
                MinMax (Adam) & 0.5623 ($\pm$ 0.0041)& 0.5977 ($\pm$ 0.0045) & 0.5079 ($\pm$ 0.0083) \\
                SOAP (Adam) & \textbf{0.6172 ($\pm$ 0.0051)}  & \textbf{0.6333 ($\pm$ 0.0160)} & \textbf{0.6196} ($\pm$ 0.0165) \\
                \midrule
                & \multicolumn{3}{c}{ToxCast Task 12 (Imbalance Ratio = 2.97\%)} \\
                \midrule
                Method & GINE & MPNN & ML-MPNN \\
                \midrule
                CE & 0.0201 ($\pm$ 0.0031) & 0.0268 ($\pm$ 0.0031) & 0.0124 ($\pm$ 0.0031) \\
                CB-CE & 0.0385 ($\pm$ 0.0042) & 0.0278 ($\pm$ 0.0073) & 0.0104 ($\pm$ 0.0029) \\
                Focal & 0.0333 ($\pm$ 0.0052) & 0.0294 ($\pm$ 0.0043) & 0.0122 ($\pm$ 0.0024) \\
                LDAM & 0.0217 ($\pm$ 0.0042) & 0.0298 ($\pm$ 0.0059) & 0.0179 ($\pm$ 0.0019) \\
                AUC-M & 0.0333 ($\pm$ 0.0024) & 0.0454 ($\pm$ 0.0047) & 0.0089 ($\pm$ 0.0023) \\
                SmoothAP & 0.227 ($\pm$ 0.0023) & 0.0208 ($\pm$ 0.0041) & 0.0079 ($\pm$ 0.0034) \\
                FastAP & 0.0052 ($\pm$ 0.0048) & 0.0052 ($\pm$ 0.0038) & 0.0153 ($\pm$ 0.0013) \\
                MinMax (Adam) & 0.0223 ($\pm$ 0.0033) & 0.0313 ($\pm$ 0.0061) & 0.0151 ($\pm$ 0.0023) \\
                SOAP (Adam) & \textbf{0.0374 ($\pm$ 0.0025)} & \textbf{0.0601 ($\pm$ 0.0059)} & \textbf{0.0181 ($\pm$ 0.0023)} \\
                \midrule
                & \multicolumn{3}{c}{ToxCast Task 8 (Imbalance Ratio = 8.67\%)} \\
                \midrule
                Method & GINE & MPNN & ML-MPNN \\
                \midrule
                CE & 0.2071 ($\pm$ 0.0121) & 0.1101 ($\pm$ 0.0049) & 0.0923 ($\pm$ 0.0027) \\
                CB-CE & 0.2089 ($\pm$ 0.0051) & 0.1349 ($\pm$ 0.0109) & 0.0734 ($\pm$ 0.0078) \\
                Focal & 0.2011 ($\pm$ 0.0034) & 0.1223 ($\pm$ 0.0113) & 0.0792 ($\pm$ 0.0082) \\
                LDAM & 0.1071 ($\pm$ 0.0101) & 0.1062 ($\pm$ 0.0104) & 0.0934 ($\pm$ 0.0125) \\
                AUC-M & 0.0662 ($\pm$ 0.098) & 0.1258 ($\pm$ 0.0132) & 0.0979 ($\pm$ 0.0096) \\
                SmoothAP & 0.0911 ($\pm$ 0.0123) & 0.1073 ($\pm$ 0.0011) & 0.0987 ($\pm$ 0.0049) \\
                FastAP & 0.0999 ($\pm$ 0.0211) & 0.1037 ($\pm$ 0.0071) & 0.0932 ($\pm$ 0.0028) \\
                MinMax (Adam) & 0.1381($\pm$ 0.0076) & 0.1173 ($\pm$ 0.0092) & 0.0903 ($\pm$ 0.0031) \\
                SOAP (Adam) & \textbf{0.2561 ($\pm$ 0.0196)} & \textbf{0.1875 ($\pm$ 0.0124)} & \textbf{0.1107 ($\pm$ 0.0807)} \\
                \bottomrule
            \end{tabular}
        \end{small}
    \end{center}
    \vskip -0.15in
\end{table*}





\begin{table}[h]
    \caption{The test AUPRC values on the MIT AICURES dataset with three graph neural network models. We report the average AUPRC and standard deviation (within brackets) from 3 independent runs over 3 different train/validation/test splits.}
    \label{tab:mit-full}
    \vskip 0.15in
    \begin{center}
        \begin{small}
            \begin{tabular}{lccc}
                \toprule
                Method & GINE & MPNN & ML-MPNN \\
                \midrule
                CE & 0.5037 ($\pm$ 0.0718) & 0.6282 ($\pm$ 0.0634) & 0.6101 ($\pm$ 0.1276) \\
                CB-CE & 0.5655 ($\pm$ 0.0453) & 0.6308 ($\pm$ 0.0263)   & 0.4903 ($\pm$ 0.1507) \\
                Focal & 0.5143 ($\pm$ 0.1062) & 0.5875 ($\pm$ 0.0774) & 0.4718 ($\pm$ 0.0691) \\
                LDAM & 0.5236 ($\pm$ 0.0551) & 0.6489 ($\pm$ 0.0556) & 0.6725 ($\pm$ 0.0594) \\
                AUC-M & 0.5149 ($\pm$ 0.0748) & 0.5542 ($\pm$ 0.0474) & 0.4429 ($\pm$ 0.0486) \\
                 SmothAP & 0.2899 ($\pm$ 0.0220) & 0.4081 ($\pm$ 0.0352) & 0.4212 ($\pm$ 0.0507) \\
                 FastAP & 0.4777 ($\pm$ 0.0896) & 0.4518 ($\pm$ 0.1495) & 0.5174 ($\pm$ 0.0150) \\
                MinMax  & 0.5292 ($\pm$ 0.0330) & 0.5774 ($\pm$ 0.0468)  & 0.5832 ($\pm$ 0.1080) \\
                SOAP & \textbf{0.6639 ($\pm$ 0.0515) } & \textbf{0.6547 ($\pm$ 0.0616) } & \textbf{0.6503 ($\pm$ 0.0532)} \\ 
                \bottomrule
            \end{tabular}
        \end{small}
    \end{center}
    \vskip -0.1in
\end{table}
\begin{figure}[h]
\centering
  \includegraphics[width=0.3\linewidth]{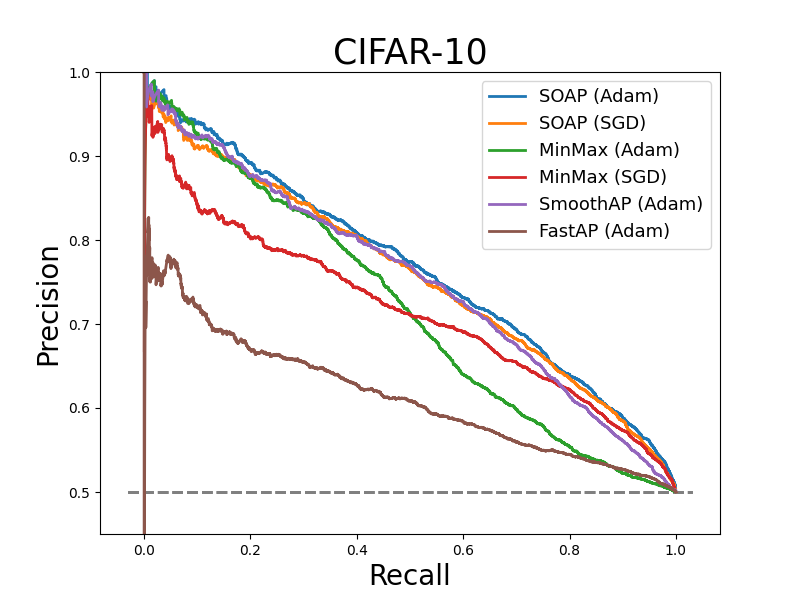}
   \includegraphics[width=0.3\linewidth]{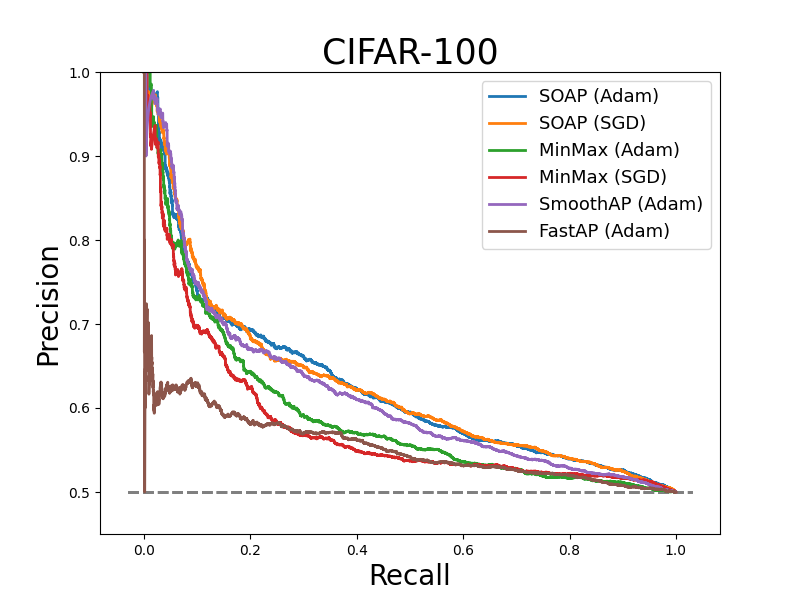}
   \includegraphics[width=0.3\linewidth]{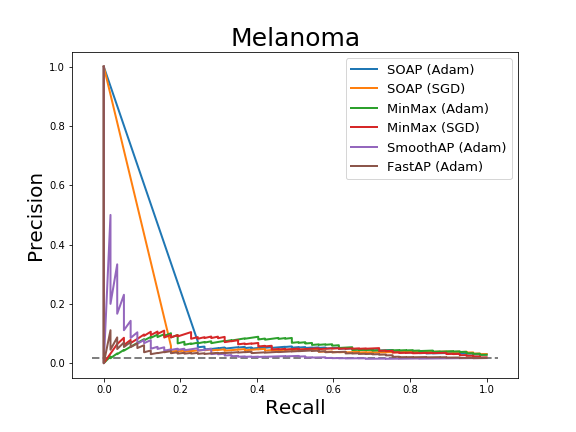}
  \caption{
 Precision-Recall curves of different methods on test dataset of CIFAR10, CIFAR100 and Melanoma datasets. The gray dashed lines are the random classifiers on test data sets whose AUPRC equals to the ratio between positive samples and all samples $n_+/n$ on every data set, respectively.}
 \label{fig:precision_recall_curves}
 \end{figure}

\newpage

\section{Analysis of SOAP (SGD-style)}
In the following, we abuse the notations $g_i(\w) = g_{\x_i}(\w)\in\R^2$ and $\u_i = \u_{\x_i} =([\u_{\x_i}]_1, [\u_{\x_i}]_2)$. 
We use $\u_{i_t}$ to denote the updated vector at the $t$-th iteration for the sampled $i_t$-th positive data. 

\subsection{Proof of Theorem~\ref{thm:4}}
\begin{proof}
By combining Lemma~\ref{lem:cumulative_var} and Lemma~\ref{lem:lem-SGD},
we have:

\begin{align*}
  \frac{\alpha}{2}\E[\sum_{t=1}^T\|\nabla P(\w_t)\|^2 ]&\leq  \E[\sum_t( P(\w_t) - P(\w_{t+1})) ]+ \frac{\alpha C_1}{2}\E[\sum_{t=1}^T\|g_{i_t}(\w_t)-\u_{i_t}\|^2] + \alpha^2T C_2 \\
  & \leq \E[\sum_t( P(\w_t) - P(\w_{t+1})) ]+ \frac{\alpha C_1}{2}\Big\{ \frac{n_+ V }{\gamma} + 2\gamma V T + 2\frac{n_+^2\alpha^2TC_3}{\gamma^2}\Big \} + \alpha^2 T C_2  \\
  & \leq \E_{t}[P(\w_1)] -\E_t[P(\w_{t+1})]+ \frac{\alpha C_1}{2}\Big\{ \frac{n_+ V }{\gamma} + 2\gamma V T + 2\frac{n_+^2\alpha^2TC_3}{\gamma^2}\Big \} + \alpha^2 T C_2 
\end{align*}
Then by set $\alpha = \frac{1}{n_+^{2/5}T^{3/5}}$, $\gamma = \frac{n_+^{2/5}}{T^{2/5}}$, and multiply $\frac{2}{\alpha T}$ on both sides of above equation, 

\begin{align*}
    \frac{1}{T}\E[\sum_{t=1}^T\|\nabla P(\w_t)\|^2 ]&\leq \frac{2\Delta_1}{T\alpha} + C_1\Big\{ \frac{n_+V}{\gamma T} + 2\gamma V + 2\frac{n_+^2\alpha^2C_3}{\gamma^2}\Big\} + \alpha C_2 \\
    &\leq  \frac{2\Delta_1n_+^{2/5}}{T^{2/5}} +C_1\Big\{ \frac{n_+^{3/5}V}{T^{3/5}} + 2\frac{n_+^{2/5}}{T^{2/5}} + 2\frac{n_+^{2/5}C_3}{T^{2/5}}\Big \} + \frac{C_2}{n_+^{2/5}T^{3/5}}\\
    &\leq O(\frac{n^{2/5}_+}{T^{2/5}})
\end{align*}
where the last inequality is due to $T\geq n_+$ and $O$ compresses constant numbers.
We finish the proof.
\end{proof}

\subsection{Proof of Lemma~\ref{lem:bounded:function}}

\begin{proof}[Proof of Lemma~\ref{lem:bounded:function}]
We first prove the second part that $g_{_i}(\w)\in\Omega$. Due to the definition of $g_i(\w) = \E_{\x_j\sim \D}[g(\w;\x_j,\x_i)] = \E_{\x_j\sim \D}[\ell(\w;\x_j,\x_i)\I(y_j = 1), \ell(\w;\x_i,\x_j)]$, and the Assumption~\ref{ass:1}, it is obvious to see that $0\leq [g_i(\w)]_1\leq M$ and $M\geq [g_i(\w)]_2\geq C/n$ for all $i$, i.e., $g_i(\w)\in\Omega$. Next, we prove the smoothness of $P(\w)$. To this end,  we need to use the following Lemma~\ref{lem:basic-smooth} and the proof will be presented after Lemma~\ref{lem:bounded:function}. 
\begin{lem}
\label{lem:basic-smooth}
Let  $L_f =  4(u_0 + M)/u_0^3, C_f=(u_0+M)/u_0^2, L_g = \sqrt{2}L_l, C_g = \sqrt{2}C_l$, then $f(\u)$ is a $L_f$ -smooth, $C_f$-Lipschitz continuous function for any $\u\in\Omega$, and $\forall\ i\in [1,\cdots n]$, $g_i$ is a $L_g$-smooth, $C_g$-Lipschitz continuous function.
\end{lem}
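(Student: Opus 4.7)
The plan is to prove the two claims separately by direct calculation, since everything reduces to elementary estimates once we exploit that $u_2$ is bounded away from zero on $\Omega$ (by the second conclusion of Lemma~\ref{lem:bounded:function}) and that $\ell$ is $C_l$-Lipschitz and $L_l$-smooth in $\w$ by Assumption~\ref{ass:1}(b).

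For $f(\u) = -u_1/u_2$, I will compute the gradient and Hessian explicitly,
\begin{equation*}
\nabla f(\u) = \bigl(-1/u_2,\ u_1/u_2^2\bigr)^\top, \qquad
\nabla^2 f(\u) = \begin{pmatrix} 0 & 1/u_2^2 \\ 1/u_2^2 & -2u_1/u_2^3 \end{pmatrix},
\end{equation*}
and then read off the constants from the box $\Omega = \{0 \le u_1 \le M,\ u_0 \le u_2 \le M\}$. The triangle inequality gives $\|\nabla f(\u)\| \le 1/u_0 + M/u_0^2 = (u_0 + M)/u_0^2 = C_f$, which via the mean value theorem yields $C_f$-Lipschitzness. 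Bounding the operator norm of the Hessian by the sum of absolute entries of its entries gives $\|\nabla^2 f(\u)\|_{\mathrm{op}} \le 2/u_0^2 + 2M/u_0^3 \le 4(u_0 + M)/u_0^3 = L_f$, which is exactly the claimed smoothness.

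For $g_i$, note that $g(\w;\x_j,\x_i) = (\ell(\w;\x_j,\x_i)\I(y_j = 1),\ \ell(\w;\x_j,\x_i))^\top$, so its Jacobian in $\w$ has two rows, $\I(y_j=1)\,\nabla_\w \ell(\w;\x_j,\x_i)^\top$ and $\nabla_\w \ell(\w;\x_j,\x_i)^\top$. Its Frobenius norm is therefore at most $\sqrt{2}\,\|\nabla_\w \ell(\w;\x_j,\x_i)\| \le \sqrt{2} C_l$. Interchanging differentiation with the expectation over $\x_j$ and using Jensen's inequality, we get $\|\nabla g_i(\w)\| \le \sqrt{2} C_l = C_g$, i.e., $g_i$ is $C_g$-Lipschitz. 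The identical argument, with the $L_l$-smoothness of $\ell$ in place of its Lipschitz property, yields $\|\nabla g_i(\w) - \nabla g_i(\w')\| \le \sqrt{2} L_l \|\w - \w'\| = L_g \|\w - \w'\|$, which is the claimed smoothness.

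There is no serious obstacle; the calculation is essentially mechanical. The only two points that require attention are (i) using the box bounds on $\Omega$ correctly so that the entries $1/u_2$, $u_1/u_2^2$, and $u_1/u_2^3$ collapse into the stated closed forms involving $u_0$ and $M$, and (ii) applying Jensen's inequality to lift the pointwise-in-$\x_j$ bounds on the Jacobian of $g(\w;\x_j,\x_i)$ through the expectation that defines $g_i(\w)$.
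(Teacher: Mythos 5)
Your proposal is correct and follows essentially the same route as the paper: explicit computation of $\nabla f$ and $\nabla^2 f$ with the box bounds on $\Omega$, and a two-row Jacobian bound for $g_i$ lifted through the expectation. The only cosmetic difference is that you bound the gradient and Hessian norms via the triangle inequality / sum of absolute entries where the paper uses the Euclidean and Frobenius norms directly; both yield the stated constants.
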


 Since $P(\w) = \frac{1}{n_+}\sum_{\x_i\in\D_+}f(g_i(\w))$.  We first show $P_i(\w) = f(g_i(\w))$ is smooth. To see this, 
\begin{align*}
    &\|\nabla P_i(\w) - \nabla P_i(\w')\| = \|\nabla g_i(\w)^{\top}\nabla f(g_i(\w)) - \nabla g_i(\w')^{\top}\nabla f(g_i(\w'))\|\\
    & \leq \|\nabla g_i(\w)^{\top}\nabla f(g_i(\w)) -  \nabla g_i(\w')^{\top}\nabla f(g_i(\w))\|\\
    & + \|\nabla g_i(\w')^{\top}\nabla f(g_i(\w)) - \nabla g_i(\w')^{\top}\nabla f(g_i(\w'))\|\\
    & \leq C_fL_g \|\w - \w'\| + C_gL_fC_g\|\w - \w'\| = ( C_fL_g  + L_fC_g^2)\|\w - \w'\|. 
\end{align*}
Hence $P(\w)$ is also  $L=( C_fL_g  + L_fC_g^2)$-smooth. 

\end{proof}

\subsection{Proof of Lemma~\ref{lem:basic-smooth}}
\begin{proof}[Proof of Lemma~\ref{lem:basic-smooth}]
According to the definition, we have
\begin{equation}
    \begin{aligned}
     f(\u) & =\frac{-[\u]_1}{[\u]_2}\quad, \nabla_{\u} f(\u)  = \bigg(\frac{-1}{[\u]_2},\frac{[\u]_1}{([\u]_2)^2}\bigg)^\top, \quad\nabla_{\u}^2 f(\u)= \left(\begin{array}{c} 0,\frac{1}{([\u]_2)^2}\\
   \frac{1}{([\u]_2)^2}, -\frac{2[\u]_1}{([\u]_2)^3}
    \end{array}\right)
    \end{aligned}
\end{equation}
Due to the assumption that $\ell(\w;\x_j,\x_i)$ is a $L_l$-smooth, $C_l$-Lipschitz continuous function, we have
\begin{equation}
    \begin{aligned}
& \| \nabla_\w g_i(\w) \|^2 \leq 2\|\frac{1}{n}\sum\limits_{j=1}^n\nabla \ell_\w(\w;\x_j,\x_i)\|^2 \leq 2C_l^2 =C_g^2\\
&\| \nabla_\w g_i(\w) - \nabla_\w g(\w') \|^2 \leq \|\frac{1}{n}\sum\limits_{j=1}^n\nabla_\w \ell(\w;\x_j,\x_i) - \frac{1}{n}\sum\limits_{j=1}^n\nabla_\w \ell(\w;\x_j,\x_i)\|^2\\
&+ \|\frac{1}{n}\sum\limits_{j=1}^n\nabla_\w \ell(\w;\x_j,\x_i)\I(y_j=1) - \frac{1}{n}\sum\limits_{j=1}^n\nabla_\w \ell(\w;\x_j,\x_i)\I(y_j=1)\|^2\leq 2L_l^2 = L_g^2 \\
& \| \nabla f(\u) \| \leq  \sqrt{\frac{1}{[\u]_2^2} + \frac{[\u]_1^2}{[\u]_2^4}}
\leq \frac{u_0+M}{u_0^2} = C_f
\\
& \|\nabla^2 f(\u) \| \leq\sqrt{ \frac{2}{[\u]_2^4} + 4\frac{[\u]^2_1}{[\u]_2^6}} \leq \frac{4(u_0 + M)}{u_0^3} = L_f
    \end{aligned}
\end{equation}
We finish the proof of Lemma~\ref{lem:basic-smooth}.
\end{proof}

\subsection{Proof of Lemma~\ref{lem:lem-SGD}}

\begin{proof}[Proof of Lemma~\ref{lem:lem-SGD}]
 To make the proof clear, we write $\nabla g_{i_t}(\w; \xi) = \nabla g(\w_t;\xi, \x_{i_t}), \xi\sim\D$.
Let $\u_{i_t}$ denote the updated $\u$ vector at the $t$-th iteration for the selected positive data $i_t$. 
\begin{align*}
   & P(\w_{t+1}) - P(\w_t)\leq   \nabla P(\w_t)^{\top}(\w_{t+1} - \w_t) + \frac{L}{2}\|\w_{t+1} - \w_t\|^2\\
     & =  - \alpha\|\nabla P(\w_t)\|^2 +  \alpha \nabla P(\w_t)^{\top}(\nabla P(\w_t) - \nabla g_{i_t}^{\top}(\w_t; \xi)\nabla f(\u_{i_t})) + \frac{\alpha^2\|G(\w_t)\|^2L}{2}\\
    & \leq   - \alpha\|\nabla P(\w_t)\|^2+ \alpha \nabla P(\w_t)^{\top}(\nabla P(\w_t) - \nabla g_{i_t}^{\top}(\w_t; \xi)\nabla f(\u_{i_t})) + \alpha^2 C_2
\end{align*}
where {$C_2 = \|G(\w_t)\|^2L/2 \leq C_g^2C_f^2L/2 $.}

Taking expectation on both sides, we have
\begin{align*}
    \E_t[P(\w_{t+1})]&\leq \E_t[P(\w_t) + \nabla P(\w_t)^{\top}(\w_{t+1} - \w_t) + \frac{L}{2}\|\w_{t+1} - \w_t\|^2]\\
    & =\E_t[ P(\w_t) - \alpha\|\nabla P(\w_t)\|^2  + \alpha \nabla P(\w_t)^{\top}(\nabla P(\w_t) - \nabla g_{i_t}(\w_t; \xi)^{\top}\nabla f(\u_{i_t}))] + \alpha^2 C_2\\
    & =P(\w_t) - \alpha\|\nabla P(\w_t)\|^2 + \alpha \nabla P(\w_t)^{\top}(\E_t[\nabla P(\w_t) - \nabla g_{i_t}(\w_t; \xi)^{\top}\nabla f(\u_{i_t})]) + \alpha^2 C_2
\end{align*}
where $\E_t$ means taking expectation over $i_t, \xi$ given $\w_t$.\\
\noindent Noting that $\nabla P(\w_t) = \E_{i_t, \xi}[\nabla g_{i_t}(\w_t; \xi)^{\top}\nabla f(g_{i_t}(\w_t))]$, where $i_t$ and $\xi$ are independent.
\begin{align*}
   & \E_t[P(\w_{t+1})]- P(\w_t)\\ &\leq - \alpha\|\nabla P(\w_t)\|^2 + \alpha \nabla P(\w_t)^{\top}(\E_{t}[\nabla g_{i_t}(\w_t; \xi)^\top\nabla f(g_{i_t}(\w_t))]- \E_t[\nabla g_{i_t}(\w_t; \xi)^{\top}\nabla f(\u_{i_t})]) + \alpha^2 C_2\\
    & = - \alpha\|\nabla P(\w_t)\|^2 + \E_t[\alpha \nabla P(\w_t)^{\top}(\nabla g_{i_t}(\w_t; \xi)^{\top}\nabla f(g_{i_t}(\w_t))- \nabla g_{i_t}(\w_t; \xi)^{\top}\nabla f(\u_{i_t}))] + \alpha^2 C_2\\
    &\overset{(a)}{\leq}  - \alpha\|\nabla P(\w_t)\|^2 +  \E_t[\frac{\alpha}{2}\| \nabla P(\w_t)\|^2 + \frac{\alpha}{2}\|\nabla g_{i_t}(\w_t; \xi)^{\top} \nabla f(g_{i_t}(\w_t))- \nabla g_{i_t}(\w_t; \xi)^{\top}\nabla f(\u_{i_t}))\|^2 + \alpha^2 C_2\\
    &\overset{(b)}{\leq}  - \alpha\|\nabla P(\w_t)\|^2 + \E_t[\frac{\alpha}{2}\| \nabla P(\w_t)\|^2 + \frac{\alpha C_1}{2}\|g_{i_t}(\w_t)-\u_{i_t}\|^2 + \alpha^2 C_2\\
    &=  - (\alpha - \frac{\alpha}{2})\|\nabla P(\w_t)\|^2  + \frac{\alpha C_1}{2}\E_t[\|g_{i_t}(\w_t)-\u_{i_t}\|^2] + \alpha^2 C_2\\
\end{align*}
where the equality (a) is due to $ab\leq a^2/2 +b^2/2$ and the inequality $(b)$ uses the factor $\|\nabla g_{i_t}(\w_t; \xi)\|\leq C_l$ and $\nabla f$ is $L_f$-Lipschitz continuous for $\u, \g_i(\w)\in\Omega$ and $C_1 = C^2_lC^2_f$.
Hence we have,
\begin{align*}
   \frac{\alpha}{2}\|\nabla P(\w_t)\|^2 &\leq  P(\w_t) - \E_t[P(\w_{t+1}) ]+ \frac{\alpha C_1}{2}\E_t[\|g_{i_t}(\w_t)-\u_{i_t}\|^2] + \alpha^2 C_2\\
\end{align*}
Taking summation and expectation over all randomness, we have
\begin{align*}
  \frac{\alpha}{2}\E[\sum_{t=1}^T\|\nabla P(\w_t)\|^2 ]&\leq  \E[\sum_t( P(\w_t) - P(\w_{t+1})) ]+ \frac{\alpha C_1}{2}\E[\sum_{t=1}^T\|g_{i_t}(\w_t)-\u_{i_t}\|^2] + \alpha^2 C_2 T\\
\end{align*}
\end{proof}

\subsection{Proof of Lemma~\ref{lem:cumulative_var}}

Let $i_t$ denote the selected positive data $i_t$ at $t$-th iteration. We will divide $\{1,\ldots, T\}$ into $n_+$ groups with the $i$-th group given by $\mathcal T_i = \{t^i_1, \ldots, t^i_{k}\ldots, \}$, where $t^i_{k}$ denotes the iteration that the $i$-th positive data is selected  at the $k$-th time for updating $\u$. Let us define $\phi(t): [T]\rightarrow [n_+]\times [T]$ that maps the selected data into its group index and within group index, i.e, there is an one-to-one correspondence between index $t$ and selected data $i$ and its index within $\mathcal T_i$. Below, we use notations $a^k_i$ to denote $a_{t^i_k}$. Let $T_i =|\mathcal T_i|$. Hence, $\sum_{i=1}^{n_+}T_i = T$.

\begin{proof}[Proof of Lemma~\ref{lem:cumulative_var}] To prove Lemma~\ref{lem:cumulative_var}, we first introduce another lemma that establishes a recursion for $\|\u_{i_t} - g_{i_t}(\w_t)\|^2$, whose proof is presented later.
\begin{lem}
\label{lem:var=B=1}
By the updates of SOAP Adam-style or SGD-style with $\mathcal B_+ = 1$, the following equation holds for $\forall \ t\in 1,\cdots, T$
\begin{equation}
    \begin{aligned}
     \E_t[\|\u_{i_t} - g_{i_t}(\w_t)\|^2]&\overset{\phi(t)}{=} \E_t[\|\u^{k}_{i} - g_{i}(\w^k_i)\|^2]\\
     & \leq (1-\gamma) \|\u_i^{k-1} - g_i(\w^{k-1}_i)\|^2 + \gamma^2V +\gamma^{-1}\alpha^2n_+^2C_3
    \end{aligned}
\end{equation}
where $\E_t$ denotes the conditional expectation conditioned on history before $t^i_{k-1}$. 
\end{lem}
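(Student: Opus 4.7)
The plan is to start from the moving-average update of the $i$-th row of $\u$, which reads $\u^{k}_i = (1-\gamma)\u^{k-1}_i + \gamma\,\widetilde{g}_i(\w^k_i)$ with $\widetilde g_i(\w^k_i) = \frac{1}{|\mathcal B|}\sum_{\x_j\in\mathcal B} g(\w^k_i;\x_j,\x_i)$, and to write
\[
\u^k_i - g_i(\w^k_i) = \underbrace{(1-\gamma)\bigl(\u^{k-1}_i - g_i(\w^{k-1}_i)\bigr)}_{A_1} + \underbrace{(1-\gamma)\bigl(g_i(\w^{k-1}_i) - g_i(\w^k_i)\bigr)}_{A_2} + \underbrace{\gamma\bigl(\widetilde{g}_i(\w^k_i) - g_i(\w^k_i)\bigr)}_{A_3}.
\]
The plan is then to expand $\|A_1+A_2+A_3\|^2$ and, conditioning further on the state right before the mini-batch $\mathcal B$ is drawn at iteration $t^i_k$ (so that $\w^k_i$ is fixed), use unbiasedness $\E[\widetilde g_i(\w^k_i)\mid\w^k_i] = g_i(\w^k_i)$ to kill the two cross-terms involving $A_3$, and use Assumption~\ref{ass:1}(c) to bound $\E[\|A_3\|^2]\le \gamma^2 V$. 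This produces the $\gamma^2 V$ noise contribution in the recursion.

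For the remaining deterministic piece $\|A_1+A_2\|^2$ I will apply Young's inequality with coefficient $\eta = \gamma/(1-\gamma)$, which is tuned so that $(1+\eta)(1-\gamma)^2 = 1-\gamma$ and $1+\eta^{-1}=\gamma^{-1}$, yielding
\[
\|A_1+A_2\|^2 \le (1-\gamma)\,\|\u^{k-1}_i - g_i(\w^{k-1}_i)\|^2 + \gamma^{-1}\|g_i(\w^{k-1}_i) - g_i(\w^k_i)\|^2.
\]
The first piece is exactly the contracting term in the lemma. For the second, I invoke the $C_g$-Lipschitz continuity of $g_i$ from Lemma~\ref{lem:basic-smooth} to pass to $\|\w^k_i-\w^{k-1}_i\|^2$, then bound the weight drift between two consecutive visits to positive $i$. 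Writing $\tau := t^i_k - t^i_{k-1}$ and using the SGD update, $\|\w^k_i - \w^{k-1}_i\|^2 \le \alpha^2 \tau \sum_{s=t^i_{k-1}}^{t^i_k-1} \|G(\w_s)\|^2$ by Cauchy--Schwarz; and since $\u^2$ is kept above $u_0$, $\|\nabla\ell\|\le C_l$, and $\|\nabla f\|\le C_f$, one has a uniform deterministic bound $\|G(\w_s)\|\le C_gC_f$, so $\|\w^k_i-\w^{k-1}_i\|^2 \le \alpha^2\tau^2\, C_g^2C_f^2$.

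The one remaining probabilistic ingredient, and the main obstacle, is taking expectation in $\tau$: because each iteration selects a positive sample independently and uniformly from $\D_+$, the gap $\tau$ is geometric with parameter $1/n_+$, giving $\E[\tau^2] = 2n_+^2 - n_+ \le 2n_+^2$. This gap analysis is the only step that really exploits the grouping $\phi(t)$ and where the factor $n_+^2$ enters. Plugging this into the Lipschitz bound yields $\gamma^{-1}\,\E\|g_i(\w^{k-1}_i)-g_i(\w^k_i)\|^2 \le \gamma^{-1}\alpha^2 n_+^2 C_3$ for a constant $C_3$ absorbing $C_g^2\cdot C_g^2C_f^2$ and the factor $2$. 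Combining the three contributions $(1-\gamma)\|\u^{k-1}_i - g_i(\w^{k-1}_i)\|^2$, $\gamma^2V$, and $\gamma^{-1}\alpha^2n_+^2C_3$ delivers the recursion claimed in the lemma. A subtle point I will be careful about is that $\w^k_i$ itself is random in $\mathcal F_{t^i_{k-1}}^c$, but the uniform boundedness of $\|G\|$ makes the pathwise estimate on $\|\w^k_i-\w^{k-1}_i\|^2$ hold before taking expectations, so no additional coupling between $\tau$ and the updates needs to be tracked.
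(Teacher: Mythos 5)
Your proposal is correct and follows essentially the same route as the paper's proof: the same three-term decomposition, unbiasedness to kill the cross terms and Assumption~\ref{ass:1}(c) to get $\gamma^2 V$, Young's inequality to obtain the $(1-\gamma)$ contraction plus a $\gamma^{-1}$-weighted drift term, Lipschitzness of $g_i$ together with the uniform bound $\|G(\w_s)\|\leq C_gC_f$, and the geometric-gap estimate $\E[(t^i_k-t^i_{k-1})^2]\leq 2n_+^2$ (the paper's Lemma~\ref{lem:bound-stale}). The only detail to add is that the actual update clips $\u^2_{\x_i}$ at $u_0$, so the recursion should start from $\u^k_i=\prod_{\Omega}\bigl[(1-\gamma)\u^{k-1}_i+\gamma\widetilde g_i(\w^k_i)\bigr]$ and invoke non-expansiveness of the projection together with $g_i(\w)\in\Omega$ from Lemma~\ref{lem:bounded:function}, exactly as the paper does in its first inequality.
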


Then, by mapping every $i_t$ to its own group and make use of Lemma~\ref{lem:var=B=1}, we have
\begin{equation}
\label{eqn:recur-g}
    \begin{aligned}
     \E[\sum_{k=0}^{K_i}\|\u_i^{k} - g_i^{k}(\w^{k}_i)\|^2]
     &\leq \E\left[\frac{[\|\u_i^{0} - g_i(\w^{0}_i)\|^2] }{\gamma} + \gamma V T_i + \gamma^{-2}n_+^2C_3\alpha^2T_i\right]
    \end{aligned}
\end{equation}
where $\u_i^0$ is the initial vector for $\u_i$, which can be computed by a mini-batch averaging estimator of  $g_i(\w_{0})$. 
Thus
\begin{align*}
    \E[\sum_{t=1}^T\|g_{i_t}(\w_t)-\u_{i_t}\|^2] & \overset{\phi(t)}{=} \E[\sum\limits_{i= 1}^{n_+}\sum_{k=0}^{K_i}\|\u_i^{k} - g_i^{k}(\w^{k}_i)\|^2] \\
    &\leq \sum\limits_{i=1}^{n_+}\Big \{  \frac{[\|\u_i^{0} - g_i^{0}(\w^{0}_i)\|^2] }{\gamma} + \gamma V \E[T_i] + \gamma^{-2}n_+^2C_3\alpha^2\E[T_i]\Big \} \\
   &  \leq \frac{n_+V}{\gamma} + \gamma VT +  \frac{n_+^2\alpha^2TC_3}{\gamma^2}
\end{align*}

\end{proof}

\subsection{Proof of Lemma~\ref{lem:var=B=1}}

\begin{proof}
We first introduce the following lemma, whose proof is presented later. 
\begin{lem}
\label{lem:bound-stale}
Suppose the sequence generated in the training process using the positive sample $i$ is $\{ \w^i_{i_{1}}, \w^i_{i_{2}}, ..$ $..,\w^i_{i_{T_i}}\}$, where $0<i_{1}<i_2<\cdots < i_{T_i}\leq T$, then $\E_{|i_k}[i_{k+1} - i_{k}] \leq n_+,\text{and}, \E_{|i_k}[(i_{k+1} - i_{k})^2] \leq 2n^2_+, \forall k$. 
\end{lem}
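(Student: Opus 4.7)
The plan is to exploit the fact that, because positive samples are drawn uniformly with replacement from $\mathcal{D}_+$ at every iteration and the draws across iterations are mutually independent, the gap $i_{k+1} - i_k$ conditioned on the history up through iteration $i_k$ is exactly a geometric random variable with success probability $p = 1/n_+$ (possibly truncated by $T - i_k$, which can only make the moments smaller, so it suffices to bound the untruncated versions).

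Concretely, I will set $Z := i_{k+1} - i_k$ and observe that, given everything up to iteration $i_k$, each subsequent iteration independently selects positive sample $i$ with probability $1/n_+$ and some other sample with probability $1 - 1/n_+$. Thus
\[
\Pr(Z = m \mid i_k) = \left(1 - \tfrac{1}{n_+}\right)^{m-1}\tfrac{1}{n_+}, \qquad m = 1, 2, \ldots,
\]
so $Z$ is $\mathrm{Geometric}(1/n_+)$ conditional on $i_k$.

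The two claims then follow from the standard moment formulas for a geometric random variable with parameter $p = 1/n_+$:
\[
\E_{|i_k}[Z] = \frac{1}{p} = n_+, \qquad \E_{|i_k}[Z^2] = \frac{2-p}{p^2} = 2n_+^2 - n_+ \le 2n_+^2.
\]
I would derive these two identities briefly (e.g.\ $\E[Z]=\sum_{m\ge 1} m(1-p)^{m-1}p = 1/p$ via differentiating the geometric series, and $\E[Z(Z-1)] = 2(1-p)/p^2$ to obtain $\E[Z^2]$) or simply cite them. Finally, for the truncated case ($i_k$ close to $T$), the conditional distribution of $Z$ is stochastically dominated by $\mathrm{Geometric}(1/n_+)$, so both bounds continue to hold.

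There is no real obstacle here beyond identifying the geometric structure; the independence and uniformity of positive-sample draws (explicitly stated just before the convergence analysis) make the conditional distribution of $Z$ geometric in one line, after which both moment bounds are elementary calculations.
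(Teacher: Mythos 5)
Your proof is correct and follows essentially the same route as the paper: both identify $i_{k+1}-i_k$ conditioned on $i_k$ as a $\mathrm{Geometric}(1/n_+)$ random variable (since positive samples are drawn uniformly with replacement and independently across iterations) and then apply the standard first and second moment formulas to get $n_+$ and $(2-p)/p^2 \le 2/p^2 = 2n_+^2$. Your additional remark about stochastic domination handling the truncation near $T$ is a minor refinement the paper omits, but the core argument is identical.
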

Define   $\widetilde{g}_{i_t}(\w_t) = g(\w_t,\xi, \x_{i_t})$.
Let $\prod_{\Omega}(\cdot): \R^2 \rightarrow \Omega$ denotes the projection operator. By the updates of $\u_{i_t}$, 
we have $\u_{i_t} =\u^k_i = \prod_{\Omega}[(1-\gamma)\u^{k-1}_i + \gamma \tilde{g}_{i_t}(\w_t)]$. 

\begin{align*}
     &\E_t[\|\u_{i_{t}} - g_{i_t}(\w_t)\|^2] 
     \overset{\phi(t)}{=} \E[\|\u^{k}_{i} - g_{i}(\w_i^k)\|^2]\\
     & = \E_t[\|\prod _{\Omega}((1-\gamma)\u^{k-1}_i + \gamma \widetilde{g}_i(\w^k_{i})) - \prod_{\Omega}(g_{i}(\w_t))\|^2] \\ 
     &\leq  \E_t[\|((1-\gamma)\u^{k-1}_i + \gamma \widetilde{g}_i(\w^k_{i})- g_{i}(\w_t)\|^2] \\
     &\leq  \E_t[\|((1-\gamma)(\u^{k-1}_i - g_i(\w^{k-1}_i)) + \gamma (\widetilde{g}_i(\w^k_{i})- g_{i}(\w^k_i)) + (1-\gamma)(g_i(\w^{k-1}_i) - g_{i}(\w^k_i))\|^2] \\
     &\leq  \E_t[\|((1-\gamma)(\u^{k-1}_i - g_i(\w^{k-1}_i)) + (1-\gamma)(g_i(\w^{k-1}_i) - g_{i}(\w^k_i))\|^2] + \gamma^2 V \\
     &\leq [(1-\gamma)^2(1+\gamma) \|\u_i^{k-1} - g_i(\w^{k-1}_i)\|^2] + \gamma^2V + \frac{(1+\gamma)(1-\gamma)^2}{\gamma}C_g\E[\|\w^k_i-\w_i^{k-1}\|^2]\\
     &\leq [(1-\gamma) \|\u_i^{k-1} - g_i(\w^{k-1}_i)\|^2] + \gamma^2V + \gamma^{-1}\alpha^2C_g\E_t[\|\sum_{t=t^i_{k-1}}^{t^i_{k}-1}\nabla g_{i_t}(\w_t ;\xi)\nabla f(\u_{i_t})\|^2]\\
     &\leq [(1-\gamma) \|\u_i^{k-1} - g_i(\w^{k-1}_i)\|^2] + \gamma^2V + \gamma^{-1}\alpha^2C_g\E_t[(t^i_k - t^i_{k-1})^2]C_g^2C_f^2)]\\
     &\overset{(a)}{\leq} \E[(1-\gamma) \|\u_i^{k-1} - g_i(\w^{k-1}_i)\|^2] + \gamma^2V +2\gamma^{-1}\alpha^2n_+^2C_g^3C_f^2 \\
     & \leq [(1-\gamma) \|\u_i^{k-1} - g_i(\w^{k-1}_i)\|^2] + \gamma^2V +\gamma^{-1}\alpha^2n_+^2C_3
   \end{align*}

where the inequality (a) is due to that $t^i_k - t^i_{k-1}$ is a geometric distribution random variable with $p=1/n_+$, i.e., $\E_{|t^i_{k-1}}[(t^i_k - t^i_{k-1})^2]\leq 2/p^2=2n_+^2$, by Lemma~\ref{lem:bound-stale}.
The last equality hold by defining $C_3 = 2C_g^3C_f^2$.

\end{proof}

\subsection{Proof of Lemma~\ref{lem:bound-stale}}
\begin{proof} Proof of Lemma~\ref{lem:bound-stale}.
Denote the random variable $\Delta_k = i_{k+1} - i_k$ that represents the iterations that the $i$th positive sample has been randomly selected for the $k+1$-th time conditioned on $i_k$.
Then $\Delta_k$ follows a Geometric distribution such that $\Pr(\Delta_k = j) = (1-p)^{j-1}p$, where $p = \frac{1}{n_+}$, $j = 1,2,3,\cdots$. As a result, $\E[\Delta_k|i_k] = 1/p = n_+$.
$\E[\Delta_k^2|i_k] = \text{Var}(\Delta_k) + \E[\Delta_k|i_k]^2= \frac{1-p}{p^2} + \frac{1}{p^2}\leq \frac{2}{p^2} = 2n_+^2$.
\end{proof}

\section{Proof of
Theorem~\ref{thm:3} (SOAP with Adam-Style Update)}
\begin{proof}

We first provide two useful lemmas, whose proof are presented later. 
\begin{lem}
\label{lem:update-Adam-B=D}
Assume assumption~\ref{ass:1} holds
\begin{equation}
\begin{aligned}
\|\w_{t+1} - \w_t\|^2 \leq \alpha^2d(1-\eta_2)^{-1}(1-\tau)^{-1}
\end{aligned}
\end{equation}
where $d$ is the dimension of $\w$, $\eta_1 < \sqrt{\eta_2} < 1$, and $\tau := \eta_1^2/\eta_2$.
\end{lem}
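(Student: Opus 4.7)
The plan is to bound $\|\w_{t+1} - \w_t\|^2$ coordinatewise, since the Adam-style update applies a coordinate-wise preconditioner. Writing $h_{t+1,j}$ and $v_{t+1,j}$ for the $j$-th coordinate, we have
\[
\|\w_{t+1} - \w_t\|^2 = \alpha^2 \sum_{j=1}^d \frac{h_{t+1,j}^2}{\epsilon + \hat v_{t+1,j}} \leq \alpha^2 \sum_{j=1}^d \frac{h_{t+1,j}^2}{\hat v_{t+1,j}},
\]
so the goal reduces to showing that for each coordinate $j$, the ratio $h_{t+1,j}^2/\hat v_{t+1,j}$ is bounded by $(1-\eta_2)^{-1}(1-\tau)^{-1}$. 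For AMSGrad the clipping $\hat v_{t+1,j} \geq v_{t+1,j}$ means it suffices to prove the bound against $v_{t+1,j}$, which is the plain exponential moving average of $G_j(\w_k)^2$.

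Next I would unroll both recursions from the initialization: $h_{t+1,j} = (1-\eta_1)\sum_{k=0}^{t} \eta_1^{t-k} G_j(\w_k)$ and $v_{t+1,j} = (1-\eta_2)\sum_{k=0}^{t}\eta_2^{t-k} G_j(\w_k)^2$ (ignoring zero initializations, which only make the denominator smaller and the inequality tighter in the AMSGrad case). The key algebraic step is to split each weight $\eta_1^{t-k}$ as $\sqrt{\tau^{t-k}} \cdot \sqrt{\eta_2^{t-k}}$ where $\tau = \eta_1^2/\eta_2$; the hypothesis $\eta_1 \leq \sqrt{\eta_2}$ ensures $\tau \leq 1$, which makes $\sum_{k=0}^{t}\tau^{t-k}$ a convergent geometric series bounded by $(1-\tau)^{-1}$.

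Applying Cauchy--Schwarz with the $\sqrt{\tau^{t-k}}$/$\sqrt{\eta_2^{t-k}}$ split gives
\[
\Bigl(\sum_{k=0}^t \eta_1^{t-k} G_j(\w_k)\Bigr)^2 \leq \Bigl(\sum_{k=0}^t \tau^{t-k}\Bigr)\Bigl(\sum_{k=0}^t \eta_2^{t-k} G_j(\w_k)^2\Bigr) \leq \frac{1}{1-\tau}\cdot \frac{v_{t+1,j}}{1-\eta_2},
\]
so $h_{t+1,j}^2 \leq \frac{(1-\eta_1)^2}{(1-\tau)(1-\eta_2)} v_{t+1,j}$. Since $(1-\eta_1)^2 \leq 1$, dividing through by $\hat v_{t+1,j} \geq v_{t+1,j}$ and summing over the $d$ coordinates yields the stated bound $\alpha^2 d (1-\eta_2)^{-1}(1-\tau)^{-1}$.

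The main obstacle is really just the Cauchy--Schwarz splitting trick; everything else is bookkeeping. One subtlety to check carefully is the treatment of the initial iterates (where $h_0 = v_0 = 0$ would make the expansion degenerate at $t = 0$); I would handle this either by starting from a nonzero initialization analogous to the SGD analysis, or by noting that at $t=0$ the bound holds trivially because $G(\w_0)$ appears in both numerator and denominator in a way that makes the ratio at most $1/(1-\eta_2) \leq 1/[(1-\eta_2)(1-\tau)]$.
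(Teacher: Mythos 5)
Your proposal is correct and follows essentially the same route as the paper's proof: a coordinatewise reduction, unrolling the $h$ and $\hat v$ recursions, splitting $\eta_1^{t-k}=\sqrt{\tau^{t-k}}\sqrt{\eta_2^{t-k}}$, and applying Cauchy--Schwarz with the geometric-series bound $(1-\tau)^{-1}$. The only cosmetic difference is that the paper drops the $(1-\eta_1)$ prefactor on $h$ at the start (via $|h^l_{t+1}|\le\eta_1|h^l_t|+|G^l(\w_t)|$) and keeps $\epsilon$ in the denominator rather than discarding it, but these are equivalent bookkeeping choices.
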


\begin{lem}
\label{lem:lem-Adam}
With $c =  (1+(1-\eta_1)^{-1})\epsilon^{-\frac{1}{2}}C_g^2L_f^2$, running $T$ iterations of SOAP (Adam-style) updates, we have
\begin{equation}
\label{eqn:thm-sum-T-1}
    \begin{aligned}
    & \sum\limits_{t=1}^T\frac{\alpha(1-\eta_1)(\epsilon + C_g^2C_f^2)^{-1/2}}{2}\| \nabla P(\w_t)\|^2 \leq \E[\V_1] - \E[\V_{T+1}]\\
     &+2\eta_1 L\alpha^2T d(1-\eta_1)^{-1}(1-\eta_2)^{-1}(1-\tau)^{-1}+L\alpha^2Td(1-\eta_2)^{-1}(1-\tau)^{-1} \\
& + 2(1-\eta_1)^{-1}\alpha C_g^2C_f^2\sum\limits_{i'=1}^d((\epsilon + \hat{v}^{i'}_{0})^{-1/2})  +c\alpha\sum\limits_{t=1}^T \E_t[  \|g_{i_t}(\w_t) - \u_{i_t}\|^2] \\
    \end{aligned}
\end{equation}
where $\V_{t+1} =P(\w_{t+1}) - c_{t+1} \langle \nabla P(\w_{t}), D_{t+1} h_{t+1} \rangle $.

\end{lem}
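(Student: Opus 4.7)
The plan is to establish a per-step descent inequality for the Lyapunov sequence $\V_{t+1}=P(\w_{t+1}) - c_{t+1}\langle \nabla P(\w_t), D_{t+1} h_{t+1}\rangle$ (with $D_{t+1}=\diag((\epsilon+\hat v_{t+1})^{-1/2})$ and $c_{t+1}$ an $O(\alpha\eta_1/(1-\eta_1))$ scalar to be fixed during the derivation), and then sum over $t=1,\ldots,T$ so that the correction terms telescope. First I would apply $L$-smoothness of $P$ (Lemma~\ref{lem:bounded:function}) with $\w_{t+1}-\w_t = -\alpha D_{t+1} h_{t+1}$ to get
\begin{align*}
P(\w_{t+1}) \leq P(\w_t) - \alpha\langle \nabla P(\w_t), D_{t+1} h_{t+1}\rangle + \tfrac{L}{2}\|\w_{t+1}-\w_t\|^2,
\end{align*}
and bound the last term by Lemma~\ref{lem:update-Adam-B=D}, already producing the $L\alpha^2 T d(1-\eta_2)^{-1}(1-\tau)^{-1}$ summand on the right.

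Next I would expand $h_{t+1} = \eta_1 h_t + (1-\eta_1) G(\w_t)$ and use the Lyapunov reformulation to rewrite the resulting cross term so that the $\eta_1$-weighted momentum piece $\langle \nabla P(\w_t), D_{t+1}\eta_1 h_t\rangle$ is paired with $c_t\langle \nabla P(\w_{t-1}), D_t h_t\rangle$ from $\V_t$. The residual is split into (i) a change-of-gradient part $\langle \nabla P(\w_t)-\nabla P(\w_{t-1}), D_{t+1}h_t\rangle$ controlled by $L\|\w_t-\w_{t-1}\|$ and Lemma~\ref{lem:update-Adam-B=D}, producing the $2\eta_1 L\alpha^2 T d(1-\eta_1)^{-1}(1-\eta_2)^{-1}(1-\tau)^{-1}$ term, and (ii) a change-of-scaling part involving $(D_{t+1}-D_t)h_t$; here the AMSGrad monotonicity of $\hat v_t$ together with $\|G(\w_t)\|_\infty\leq C_g C_f$ makes the coordinate-wise increments telescope into $2(1-\eta_1)^{-1}\alpha C_g^2 C_f^2\sum_{i'=1}^d(\epsilon+\hat v_0^{i'})^{-1/2}$.

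The remaining and most delicate term is $(1-\eta_1)\alpha\langle \nabla P(\w_t), D_{t+1} G(\w_t)\rangle$. I would split it as $(1-\eta_1)\alpha\langle \nabla P(\w_t), D_{t+1}\nabla P(\w_t)\rangle + (1-\eta_1)\alpha\langle \nabla P(\w_t), D_{t+1}(G(\w_t)-\nabla P(\w_t))\rangle$. The first piece is bounded below by $(1-\eta_1)\alpha(\epsilon+C_g^2C_f^2)^{-1/2}\|\nabla P(\w_t)\|^2$ because each diagonal entry of $D_{t+1}$ is at least $(\epsilon+C_g^2C_f^2)^{-1/2}$. The second piece is handled with Young's inequality, producing $\tfrac12(1-\eta_1)\alpha(\epsilon+C_g^2C_f^2)^{-1/2}\|\nabla P(\w_t)\|^2$ (absorbed to the left, yielding the stated LHS coefficient) plus $\tfrac12(1-\eta_1)^{-1}\alpha\epsilon^{-1/2}\|\nabla P(\w_t)-G(\w_t)\|^2$. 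Using $G(\w_t) = \nabla g_{i_t}(\w_t;\xi)^\top\nabla f(\u_{i_t})$, $\nabla P(\w_t) = \E_t[\nabla g_{i_t}(\w_t;\xi)^\top\nabla f(g_{i_t}(\w_t))]$, the $L_f$-smoothness of $f$ on $\Omega$ from Lemma~\ref{lem:basic-smooth}, and $\|\nabla g\|\leq C_g$, the conditional expectation of this squared norm is $\leq C_g^2 L_f^2\E_t\|g_{i_t}(\w_t)-\u_{i_t}\|^2$. Setting $c = (1+(1-\eta_1)^{-1})\epsilon^{-1/2}C_g^2L_f^2$ collects both the Young's residual and the analogous bias contribution inherited from the momentum split into a single $c\alpha\sum_t\E_t\|g_{i_t}(\w_t)-\u_{i_t}\|^2$ term.

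The main obstacle is tracking the interaction between the random adaptive scaling $D_{t+1}$, the momentum, and the compositional bias: in particular, the change-of-scaling telescoping is what forces the AMSGrad $\hat v$-max update (plain Adam would break the sign), and the Young's split must be tuned so that the coefficient of $\|\nabla P(\w_t)\|^2$ exactly matches the $(1-\eta_1)(\epsilon+C_g^2C_f^2)^{-1/2}/2$ on the LHS after absorption. Once these constants are aligned, summing the per-step inequality from $t=1$ to $T$ and using $\V_1-\V_{T+1}$ to collapse the telescoping pieces yields exactly the bound~(\ref{eqn:thm-sum-T-1}).
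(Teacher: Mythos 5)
Your overall architecture matches the paper's proof: the same Lyapunov function $\V_t=P(\w_t)-c_t\langle\nabla P(\w_{t-1}),D_th_t\rangle$ with $\eta_1(\alpha+c_{t+1})=c_t$ cancelling the momentum recursion, the same use of Lemma~\ref{lem:update-Adam-B=D} for the quadratic terms, the same AMSGrad telescoping of the $(\epsilon+\hat v^{i'}_t)^{-1/2}$ increments, and the same constant $c$. However, there is a genuine gap in how you handle the ``most delicate term.'' You split $G(\w_t)=\nabla P(\w_t)+(G(\w_t)-\nabla P(\w_t))$, apply Young's inequality to the full deviation, and then claim $\E_t\|\nabla P(\w_t)-G(\w_t)\|^2\leq C_g^2L_f^2\,\E_t\|g_{i_t}(\w_t)-\u_{i_t}\|^2$. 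This inequality is false: writing $G(\w_t)=A_t+B_t$ with $A_t=\nabla g_{i_t}(\w_t;\xi)^{\top}\nabla f(g_{i_t}(\w_t))$ (which satisfies $\E_t[A_t]=\nabla P(\w_t)$ but has nonzero conditional variance) and $B_t=\nabla g_{i_t}(\w_t;\xi)^{\top}(\nabla f(\u_{i_t})-\nabla f(g_{i_t}(\w_t)))$, only $\|B_t\|^2$ is controlled by $C_g^2L_f^2\|g_{i_t}(\w_t)-\u_{i_t}\|^2$; the deviation $A_t-\nabla P(\w_t)$ contributes a sampling-variance term of constant order (it is nonzero even when $\u_{i_t}=g_{i_t}(\w_t)$ exactly). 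Carried through your Young split, this variance produces an extra $O(\alpha T)$ on the right-hand side, i.e.\ a non-vanishing $O(1)$ after normalizing by $\alpha T$, which neither matches the stated bound~(\ref{eqn:thm-sum-T-1}) nor permits the $O(n_+^{2/5}/T^{2/5})$ rate of Theorem~\ref{thm:3}.

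The paper avoids this by never applying Young's inequality to the zero-mean fluctuation: it replaces $D_{t+1}$ by the $\F_t$-measurable $D_t$ in the leading term (pushing the difference $(D_{t+1}-D_t)h_{t+1}$ into the telescoping part $I_3^t$), so that the conditional expectation of $\langle\nabla P(\w_t),D_tA_t\rangle$ equals $\|\nabla P(\w_t)\|^2_{D_t}$ \emph{exactly}, and Young's inequality is applied only to the bias piece $B_t$, whose squared norm genuinely is bounded by $\epsilon^{-1/2}C_g^2L_f^2\|g_{i_t}(\w_t)-\u_{i_t}\|^2$. Note that your choice of $D_{t+1}$ (which depends on $G(\w_t)$ and hence is not conditionally deterministic) is what forces you into the lossy Young route in the first place; switching to $D_t$ and isolating $B_t$ before taking expectations repairs the argument and recovers the paper's proof.
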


According to Lemma~\ref{lem:lem-Adam} and plugging Lemma~\ref{lem:cumulative_var} into equation~(\ref{eqn:thm-sum-T-1}), we have
\begin{equation}
\label{eqn:thm-sum-T-3}
    \begin{aligned}
    &\sum\limits_{t=1}^T\frac{\alpha(1-\eta_1)(\epsilon + C_g^2C_f^2)^{-1/2}}{2}\| \nabla P(\w_t)\|^2 \\
    & \leq  \E[\V_{1}]- \E[\V_{T+1}] +2\eta_1 L\alpha^2T d(1-\eta_1)^{-1}(1-\eta_2)^{-1}(1-\tau)^{-1}+L\alpha^2dT(1-\eta_2)^{-1}(1-\tau)^{-1} \\
& + 2c\alpha C_g^2C_f^2\sum\limits_{i'=1}^d(\epsilon + \hat{v}^{i'}_{0})^{-1/2} + c\alpha (\frac{n_+V}{\gamma} + 2\gamma V T + \frac{2C_gn_+^2C_3\alpha^2T}{\gamma^2})
\\
    \end{aligned}
\end{equation}

Let $\eta' = (1-\eta_2)^{-1}(1-\tau)^{-1}, \eta^{''}  = (1-\eta_1)^{-1}(1-\eta_2)^{-1}(1-\tau)^{-1}$, and $\widetilde{\eta} = (1-\eta_1)^{-2}(1-\eta_2)^{-1}(1-\tau)^{-1}$. As $(1-\eta_1)^{-1}\geq 1,(1-\eta_2)^{-1}\geq 1 $, then $\widetilde{\eta} \geq \eta^{''}\geq\eta'\geq 1$.

Then by rearranging terms in Equation~(\ref{eqn:thm-sum-T-3}), dividing $\alpha T (1+\eta_1)(\epsilon +C_g^2C_f^2)^{-1/2}$ on both sides and suppress constants, $C_g, L_g, C_3, L, C_f, L_f, V, \epsilon$ into big $O$, we get
\begin{equation}
\label{eqn:grad_s_bound}
    \begin{aligned}
    \frac{1}{T}\sum\limits_{t=1}^T\| \nabla P(\w_t)\|^2
    &\leq
    \frac{1}{\alpha T(1-\eta_1)} O\Big (\E[\V_1] - \E[\V_{T+1}] +\eta^{''}
    \eta_1\alpha^2 Td + \eta^{'}
    \alpha^2 Td  + \alpha \sum\limits_{i'=1}^d(\epsilon + \hat{v}^{i'}_{0})^{-1/2}\\
    &+\frac{c\alpha n_+}{\gamma} + c\alpha\gamma T + \frac{c\alpha^3n_+^2T}{\gamma^{2}} \Big ) \\
    &\overset{(a)}{\leq}
    \frac{1}{\alpha T(1-\eta_1)} O\Big (\E[\V_1] - \E[\V_{T+1}] +\eta^{''}
    \eta_1\alpha^2 Td + \eta^{'}
    \alpha^2 Td  + \alpha d(\epsilon + C_fC_g)^{-1/2}\\
    &+\frac{c\alpha n_+}{\gamma} + c\alpha\gamma T + \frac{c\alpha^3n_+^2T}{\gamma^{2}} \Big ) \\
    & \overset{(b)}{\leq} \frac{\widetilde{\eta}}{\alpha T} O\Big (\E[\V_1] - [\V_{T+1}] + (1+
    \eta_1)\alpha^2 Td  + \alpha d +\frac{c\alpha n_+}{\gamma} + c\alpha\gamma T + \frac{c\alpha^3n_+^2T}{\gamma^2} \Big )
    \end{aligned}
\end{equation}
where the inequality $(a)$ is due to $\hat{v}_0^{i'} = G^{i'}(\w_0)^2 \leq \|G(\w_0)\|^2\leq  C^2_fC^2_g$. The last inequality $(b)$ is due to  $\widetilde{\eta} \geq \eta^{''}\geq\eta'\geq 1$.

Moreover, by the definition of $\V$ and $\w_0 = \w_1$, we have 
\begin{equation}
\label{eqn:L_0_T+1}
    \begin{aligned}
      \E[\V_1] & = P(\w_1) -c_{1} \langle \nabla P(\w_{0}), D_1 h_1 \rangle  \leq P(\w_1) + c_{1}\|\nabla P(\w_{0})\|\|\w_{1}  -\w_{0}\|\frac{1}{\alpha} 
     =P(\w_1) \\
    - \E[\V_{T+1}] &\leq -P(\w_{T+1}) + c_{T+1} \langle \nabla P(\w_{T}), D_T h_T \rangle \\
    &\leq -\min_\w P(\w) +  c_{T+1}\|\nabla P(\w_{t-1})\|\|\w_{t+1}  -\w_{t}\|\frac{1}{\alpha}\\
    &\overset{(a)}{\leq} -\min_\w P(\w) + (1-\eta_1)^{-1}\alpha\sqrt{d}(1-\eta_2)^{-1/2}(1-\tau)^{-1/2} \\
    &\overset{(b)}{\leq} -\min_\w P(\w) + \widetilde{\eta}\sqrt{d}\alpha
    \end{aligned}
\end{equation}
where the inequality $(a)$ is due to Lemma~\ref{lem:update-Adam-B=D} and $c_{T+1}\leq (1-\eta_1)^{-1}\alpha$ in equation~(\ref{eqn:thm3-c}). The inequality $(b)$ is due to $(1-\eta_1)^{-1}(1-\eta_2)^{-1/2}(1-\tau)^{-1/2}
\leq (1-\eta_1)^{-1}(1-\eta_2)^{-1}(1-\tau)^{-1} \leq \eta^{''}\leq \widetilde{\eta}$. \\
\noindent
Thus $  \E[\V_1] - \E[\V_{T+1}] \leq P(\w_1) - \min_\w P(\w) + \widetilde{\eta}\sqrt{d}\alpha \leq \Delta_1 +\widetilde{\eta}\sqrt{d}\alpha$  by combining equation~(\ref{eqn:grad_s_bound}) and ~(\ref{eqn:L_0_T+1}).
\\
\noindent
Then we have
\begin{equation}
    \begin{aligned}
    \frac{1}{T}\sum\limits_{t=1}^T\| \nabla P(\w_t)\|^2 &\leq
    \widetilde{\eta} O\Big ( \frac{\Delta_1+\widetilde{\eta}\sqrt{d}\alpha}{\alpha T} + (1+\eta_1)\alpha d  +  \frac{d}{T} + \frac{n_+c}{T\gamma} +  c\gamma +\frac{\alpha^2n_+^2}{\gamma^2}  \Big )  \\
   &\overset{(a)}{ \leq} \widetilde{\eta} O \Big (  \frac{\Delta_1 n_+^{2/5}}{T^{2/5}} +\frac{\widetilde{\eta}\sqrt{d}}{T}  + \frac{(1+\eta_1)d}{n_+^{2/5}T^{3/5}} + \ \frac{d}{T} +  \frac{cn_+^{3/5}}{T^{3/5}} +2\frac{cn_+^{2/5}}{T^{2/5}}  \Big )\\
   &\overset{(b)}{\leq} O(\frac{n_+^{2/5}}{T^{2/5}})
    \end{aligned}
\end{equation}
The inequality $(a)$ is due to $\gamma= \frac{n_+^{2/5}}{T^{2/5}}$, $\alpha = \frac{1}{n_+^{2/5}T^{3/5}}$. In inequality $(b)$, we further compress the $\Delta_1$, $\eta_1$, $\widetilde{\eta}$, $c$ into big $O$ and $\gamma \leq 1 \rightarrow n_+^{2/5}\leq T^{2/5}$.

\end{proof}

\subsection{Proof of
Lemma~\ref{lem:update-Adam-B=D}}

\begin{proof} This proof is following the proof of Lemma 4 in ~\citep{chen2021solving}.

 Choosing $\eta_1 < 1$ and defining $\tau =\frac{\eta_1^2}{\eta_2}$, with the Adam-style (Algorithm~\ref{alg:3}) updates of SOAP that $h_{t+1} = \eta_1 h_{t} + (1-\eta_1)G(\w_t)$, we can verify for every dimension $l$,
 \begin{equation}
 \label{eqn:lemma4-1}
     \begin{aligned}
         |h^l_{t+1}| &= |\eta_1 h^l_t + (1-\eta_1)G^l(\w_t)| \leq \eta_1|h^l_t| +|G^l(\w_t)| \\
         & \leq \eta_1(\eta_1|h^l_{t-1}| +|G^l(\w_{t-1})|) + |G^l(\w_t)| \\
         &\leq \sum\limits_{p=0}^t \eta_1^{t-p}|G^l(\w_p)| = \sum\limits_{p=0}^t \sqrt{\tau}^{t-p}\sqrt{\eta_2}^{t-p}|G^l(\w_p)| \\
         &\leq \Big(\sum\limits_{p=0}^t\tau^{t-p}\Big)^{\frac{1}{2}}\Big(\sum\limits_{p=0}^t\eta_2^{t-p}(G^l(\w_p))^2\Big)^{\frac{1}{2}}\\
         &\leq (1-\tau)^{-\frac{1}{2}}\Big ( \sum\limits_{p=0}^t \eta_2^{t-p}(G^l(\w_t) )^2 \Big )^{\frac{1}{2}}
     \end{aligned}
 \end{equation}
where $\w^l$ is the $l$th dimension of $\w$, the third inequality follows the Cauchy-Schwartz inequality.
For the $l$th dimension of $\hat{v}$, $\hat{v}^l_t$, first we have $\hat{v}_1^l \geq (1-\eta_2)(G^l(\w_1)^2)$. Then since
    \begin{align*}
    \hat{v}^l_{t+1} \geq \eta_t\hat{v}^l_t + (1-\eta_2)(G^l(\w_t))^2
    \end{align*}
by induction we have
\begin{equation}
 \label{eqn:lemma4-2}
    \begin{aligned}
    \hat{v}^l_{t+1} \geq (1-\eta_2)\sum\limits_{p=0}^t \eta_2^{t-p}(G^l(\w_t))^2
    \end{aligned}
\end{equation}
Using equation~(\ref{eqn:lemma4-1}) and equation~(\ref{eqn:lemma4-2}), we have
\begin{equation}
    \begin{aligned}
    |h^l_{t+1}|^2 &\leq (1-\tau)^{-1}\Big ( \sum\limits_{p=0}^t \eta_2^{t-p}(G^l(\w_t) )^2 \Big )\\
    &\leq (1-\eta_2)^{-1}(1-\tau)^{-1}\hat{v}^l_{t+1}
    \end{aligned}
\end{equation}
Then follow the  Adam-style update in Algorithm~\ref{alg:3}, we have
\begin{equation}
    \begin{aligned}
    \|\w_{t+1} - \w_t\|^2 = \alpha^2 \sum\limits_{l=1}^d (\epsilon + \hat{v}^l_{t+1})^{-1} |h^l_{t+1}|^2 \leq \alpha^2d(1-\eta_2)^{-1}(1-\tau)^{-1}
    \end{aligned}
\end{equation}
which completes the proof.
\end{proof}

\subsection{Proof of Lemma~\ref{lem:lem-Adam}}
\begin{proof}  To make the proof clear, we make some definitions the same as the proof of Lemma~\ref{lem:lem-SGD}. Denote by $\nabla g_{i_t}(\w_t; \xi) = \nabla g(\w_t;\xi, \x_{i_t}), \xi\sim\D$, where $i_t$ is a positive sample randomly generated from $\D_+$ at $t$-th iteration, and $\xi$ is a random sample that generated from $\D$ at $t$-th iteration. It is worth to notice that $i_t$ and $\xi$ are independent. $\u_{i_t}$ denote the updated $\u$ vector at the $t$-th iteration for the selected positive data $i_t$.

\begin{align*}
    P(\w_{t+1})&\leq P(\w_t) + \nabla P(\w_t)^{\top}(\w_{t+1} - \w_t) + \frac{L}{2}\|\w_{t+1} - \w_t\|^2\\
    & \leq P(\w_t)-\alpha \nabla P(\w_t)^{\top}(D_{t+1}h_{t+1}) +  \alpha^2d(1-\eta_2)^{-1}(1-\tau)^{-1}L/2\\
\end{align*}
where $D_{t+1} = \frac{1}{\sqrt{\epsilon I + \hat{\v}_{t+1}}}$, $h_{t+1} =\eta_1 h_t + (1-\eta_1)  \nabla g_{i_t}^{\top}(\w_t; \xi)\nabla f(\u_{i_t})$ and the second inequality is due to Lemma~\ref{lem:update-Adam-B=D}. Taking expectation on both sides, we have
\begin{align*}
    \E_t[P(\w_{t+1})]&\leq P(\w_t)  \underbrace{- \E_t[ \nabla P(\w_t)^{\top}(D_{t+1}h_{t+1})]}_{\Upsilon}\alpha + \alpha^2d(1-\eta_2)^{-1}(1-\tau)^{-1}L
\end{align*}
where $\E_t[\cdot] = \E[\cdot | \F_t]$ implies taking expectation over $i_t, \xi$ given $\w_t$. 
In the following analysis, we decompose $\Upsilon$ into three parts and bound them one by one:

\begin{align*}
\Upsilon &=  -  \langle \nabla P(\w_t), D_{t+1}h_{t+1}\rangle = -\langle \nabla P(\w_t), D_th_{t+1}\rangle -\langle \nabla P(\w_t), (D_{t+1} - D_{t})h_{t+1} \rangle \\
& = -(1-\eta_1)\langle \nabla P(\w_t), D_{t}\nabla g_{i_t}(\w_t; \xi)^{\top}\nabla f(\u_{i_t})\rangle - \eta_1\langle \nabla P(\w_t), D_{t}h_t \rangle  \\
& - \langle \nabla P(\w_t), (D_{t+1} - D_{t})h_{t+1} \rangle\\
&= I_1^t + I_2^t + I_3^t
\end{align*}

Let us first bound $I_1^t$,
\begin{equation}
\label{eqn:thm3-I_1_1}
\begin{aligned}
\E_t[I_1^t] & \overset{(a)}{=}-(1-\eta_1)\langle \nabla P(\w_t), \E_{t}[D_{t}\nabla g_{i_t}(\w_t; \xi)^{\top}\nabla f(\u_{i_t})]\rangle  \\
& = -(1-\eta_1) \langle \nabla P(\w_t), \E_{t}[D_{t}\nabla g_{i_t}(\w_t; \xi)^{\top}\nabla f(g_{i_t}(\w_t))]\rangle\\
&+ (1-\eta_1)\langle \nabla P(\w_t), \E_{t}[D_{t}\nabla g_{i_t}(\w_t; \xi)^{\top}(\nabla f(\u_{i_t}) -\nabla f(g_{i_t}(\w_t))]\rangle \\
& \leq -(1-\eta_1)\|\nabla P(\w_t)\|^2_{D_t}\\
&+(1-\eta_1)\|D_{t}^{-1/2} \nabla P(\w_t)\|\E_t[\|D_{t}^{-1/2}\nabla g_{i_t}(\w_t; \xi)^{\top}(\nabla f(\u_{i_t}) -\nabla f(g_{i_t}(\w_t)))\|] \\
& \overset{(b)}{\leq} -(1-\eta_1)\|\nabla P(\w_t)\|^2_{D_t} +\frac{(1-\eta_1)\| \nabla P(\w_t) \|^2_{D_t} }{2}\\
&+ \frac{(1-\eta_1)\E_t[\|D_{t}^{-1/2}\nabla g_{i_t}(\w_t; \xi)^{\top}(\nabla f(\u_{i_t}) -\nabla f(g_{i_t}(\w_t)))\|^2]}{2} \\
 &\leq  -\frac{(1-\eta_1)}{2}\|\nabla P(\w_t)\|^2_{D_t} + \frac{ (1-\eta_1)}{2}\E_t[\|\nabla g_{i_t}(\w_t; \xi)^{\top}(\nabla f(\u_{i_t}) -\nabla f(g_{i_t}(\w_t))\|^2_{D_t}] \\
 &\overset{(c)}{\leq} -\frac{(1-\eta_1)}{2}(\epsilon + C_g^2C_f^2)^{-1/2}\|\nabla P(\w_t) \|^2 +\frac{1}{2}\epsilon^{-1/2}C^2_g L_f^2\E[\|g_{i_t}(\w_t) - \u_{i_t}\|^2] \\
\end{aligned}
\end{equation}
where equality $(a)$ is due to $\nabla P(\w_t)  = \E_{i_t, \xi}[\nabla g_{i_t}(\w_t; \xi)^{\top}\nabla f(g_{i_t}(\w_t))]$, where $i_t$ and $\xi$ are independent.
 The inequality $(b)$ is according to $ab\leq a^2/2  +b^2/2$. The last inequality $(c)$ is due to $\epsilon^{-1/2} \I\geq \|D_t \I\| = \|\frac{1}{\sqrt{\epsilon \I + \hat{v}_{t+1}}}\|\geq\| (\epsilon \I + C_g^2C_f^2)^{-1/2}\| = (\epsilon + C_g^2C_f^2)^{-1/2} \I$, $(1-\eta_1) \leq 1$ and 
\begin{equation}
\label{eqn:thm3-I_1}
\begin{aligned}
&\E_t[\|\nabla g_{i_t}(\w_t; \xi)^{\top}(\nabla f(\u_{i_t}) -\nabla f(g_{i_t}(\w_t)))\|^2_{D_t}] \\
& \leq \epsilon^{-1/2}C^2_g\E_t[\| \nabla f(\u_{i_t}) -\nabla f(g_{i_t}(\w_t)) \|_{\I}^2] \\
&\leq  \epsilon^{-1/2}C^2_g L_f^2\E_t[\|g_{i_t}(\w_t) - \u_{i_t}\|^2]
\end{aligned}
\end{equation}

For $I_2^t$ and $I_3^t$, we have
\begin{equation}
\label{eqn:thm3-I_2}
\begin{aligned}
\E_t[I_2^t]& = - \eta_1\langle \nabla P(\w_t) - \nabla P(\w_{t-1}), D_{t}h_t \rangle  - \eta_1 \langle \nabla P(\w_{t-1}), D_{t}h_t \rangle \\
&\leq \eta_1L\alpha^{-1}\|\w_t -\w_{t-1}\|^2 - \eta_1 \langle \nabla P(\w_{t-1}), D_{t}h_t \rangle \\
& = \eta_1L\alpha^{-1}\|\w_t -\w_{t-1}\|^2 +\eta_1 (I_1^{t-1}+I_2^{t-1}+I_3^{t-1}) \\
&\leq \eta_1L\alpha d(1-\eta_2)^{-1}(1-\tau)^{-1} +\eta_1 (I_1^{t-1}+I_2^{t-1}+I_3^{t-1})
\end{aligned}
\end{equation}
where the last equation applies Lemma~\ref{lem:update-Adam-B=D}.

\begin{equation}
\label{eqn:thm3-I_3}
\begin{aligned}
\E_t[I_3^t] &= - \langle\nabla P(\w_t), (D_{t+1} - D_{t})h_{t+1} \rangle = -\sum\limits_{i'=1}^d \nabla_{i'} P(\w_t)((\epsilon + \hat{v}^{i'}_{t})^{-1/2}-(\epsilon + \hat{v}^{i'}_{t+1})^{-1/2})h^{i'}_{t+1}\\
&\leq \|\nabla P(\w_t)\|\|h_{t+1}\|\sum\limits_{i'=1}^d((\epsilon + \hat{v}^{i'}_{t})^{-1/2}-(\epsilon + \hat{v}^{i'} _{t+1})^{-1/2})\\
& \leq C_g^2C_f^2\sum\limits_{i'=1}^d((\epsilon + \hat{v}^{i'}_{t})^{-1/2}-(\epsilon + \hat{v}^{i'}_{t+1})^{-1/2})
\end{aligned}
\end{equation}
By combining Equation~(\ref{eqn:thm3-I_1}),~(\ref{eqn:thm3-I_2}) and~(\ref{eqn:thm3-I_3}) together,
\begin{equation}
\label{eqn:thm3-I-recur}
\begin{aligned}
\E_t[I_1^t + I_2^t + I_3^t]  &\leq  -\frac{(1-\eta_1)}{2}(\epsilon  + C_g^2C_f^2)^{-1/2} \| \nabla P(\w_t)\|^2 +\frac{1}{2} \epsilon^{-1/2}C^2_g L_f^2\E_t[\|g_{i_t}(\w_t) - \u_{i_t}\|^2]  \\
&+\eta_1L\alpha d(1-\eta_2)^{-1}(1-\tau)^{-1} +\eta_1 (I_1^{t-1}+I_2^{t-1}+I_3^{t-1})\\
&+  C_g^2C_f^2\sum\limits_{i'=1}^d((\epsilon + \hat{v}^{i'}_{t})^{-1/2}-(\epsilon + \hat{v}^{i'}_{t+1})^{-1/2})
\end{aligned}
\end{equation}

Define the Lyapunov function
\begin{equation}
\label{eqn:thm3-Laypnov}
\begin{aligned}
\V_t =P(\w_t) - c_t \langle \nabla P(\w_{t-1}), D_t h_t \rangle \\
\end{aligned}
\end{equation}
where $c_t$ and $c$ will be defined later.

\begin{equation}
\begin{aligned}
& \E_t[\V_{t+1} - \V_{t}] \\
& = P(\w_{t+1}) - P(\w_t) - c_{t+1} \langle \nabla P(\w_{t}), D_{t+1}h_{t+1} \rangle +  c_{t} \langle \nabla P(\w_{t-1}), D_th_{t} \rangle \\
&\leq  - (c_{t+1}+\alpha) \langle \nabla P(\w_{t}), D_{t+1}h_{t+1} \rangle  + \frac{L}{2}\|\w_{t+1} - \w_t\|^2 +   c_{t} \langle \nabla P(\w_{t-1}), D_th_{t} \rangle
  \\
&= (c_{t+1} +\alpha)(I_1^t+I_2^t+I_3^t) + \frac{L}{2}\|\w_{t+1} - \w_t\|^2-c_{t}(I_1^{t-1} +I_2^{t-1}+I_3^{t-1})\\
&\overset{Eqn~ (\ref{eqn:thm3-I-recur})\  \text{and}\  Lemma~\ref{lem:update-Adam-B=D}}{\leq} -(\alpha +c_{t+1})\frac{(1-\eta_1)}{2}(\epsilon  + C_g^2C_f^2)^{-1/2}\| \nabla P(\w_t)\|^2  \\
&+(\alpha +c_{t+1})\eta_1 L\alpha d(1-\eta_2)^{-1}(1-\tau)^{-1} +\eta_1 (\alpha +c_{t+1})(I_1^{t-1}+I_2^{t-1}+I_3^{t-1})  \\
& + (\alpha +c_{t+1}) C_g^2C_f^2\sum\limits_{i'=1}^d((\epsilon + \hat{v}_{i'}^{t})^{-1/2} - (\epsilon + \hat{v}_{i'} ^{t+1})^{-1/2})  \\
& +\frac{L}{2}\alpha^2d(1-\eta_2)^{-1}(1-\tau)^{-1}-c_{t}(I_1^{t-1} +I_2^{t-1}+I_3^{t-1}) + \frac{\epsilon^{-1/2}C_g^2L_f^2(\alpha +c_{t+1})}{2} \|g_{i_t}(\w_t) -\u_{i_t}\|^2\\
\end{aligned}
\end{equation}
By setting $\alpha_{t+1} \leq \alpha_t = \alpha$, $c_t = \sum\limits_{p=t}^\infty(\prod\limits_{j=t}^p\eta_1)\alpha_j$, and $c =  (1+(1-\eta_1)^{-1})\epsilon^{-\frac{1}{2}}C_g^2L_f^2$, we have
\begin{equation}
\label{eqn:thm3-c}
\begin{aligned}
 c_t\leq (1-\eta_1)^{-1}\alpha_t,\  \frac{2(\alpha + c_{t+1})}{\alpha}\beta \epsilon^{-1/2}C_g^2L_f^2 \leq  c\beta, \ 
\eta_1(\alpha + c_{t+1}) = c_{t}
\end{aligned}
\end{equation}
As a result,  $\eta_1 (\alpha +c_{t+1})(I_1^{t-1}+I_2^{t-1}+I_3^{t-1})  - c_t(I_1^{t-1}+I_2^{t-1}+I_3^{t-1}) = 0$
\begin{equation}
\label{eqn:L_t}
\begin{aligned}
\E_t[\V_{t+1} - \V_{t}]
&\leq -(\alpha +c_{t+1})\frac{(1-\eta_1)}{2}(\epsilon  + C_g^2C_f^2)^{-1/2}\| \nabla P(\w_t)\|^2  \\
&+(\alpha +c_{t+1})\eta_1 L\alpha d(1-\eta_2)^{-1}(1-\tau)^{-1}+\frac{L}{2}\alpha^2d(1-\eta_2)^{-1}(1-\tau)^{-1} \\
& + (\alpha+c_{t+1}) C_g^2C_f^2\sum\limits_{i'=1}^d((\epsilon + \hat{v}_{i'}^{t})^{-1/2}-(\epsilon + \hat{v}_{i'} ^{t+1})^{-1/2})  \\
& + \frac{(\alpha +c_{t+1})}{2} \epsilon^{-1/2}C_g^2L_f^2  \|g_{i_t}(\w_t) -\u_{i_t}\|^2   \\
& \leq -\alpha \frac{(1-\eta_1)}{2}(\epsilon  + C_g^2C_f^2)^{-1/2}\| \nabla P(\w_t)\|^2 \\
& +2\eta_1 L\alpha^2T d(1-\eta_1)^{-1}(1-\eta_2)^{-1}(1-\tau)^{-1}+\frac{L}{2}T\alpha^2d(1-\eta_2)^{-1}(1-\tau)^{-1} \\
& + 2(1-\eta_1)^{-1}\alpha C_g^2C_f^2\sum\limits_{i'=1}^d((\epsilon + \hat{v}_{i'}^{t})^{-1/2}-(\epsilon + \hat{v}_{i'} ^{t+1})^{-1/2})  
 +\frac{c\alpha}{4}\sum\limits_{t=1}^T \E_t[  \|g_{i_t}(\w_t) - \u_{i_t}\|^2]
\end{aligned}
\end{equation}
where the last inequality is due to equation~(\ref{eqn:thm3-c}) such that we have $2(\alpha + c_{t+1}) \epsilon^{-1/2}C_g^2L_f^2 \leq  c\alpha$, and $\alpha + c_{t+1} \leq 2(1-\eta_1)^{-1}\alpha$.
\\
Then by rearranging terms, and taking summation from $1,\cdots, T$ of equation~(\ref{eqn:L_t}), we have
\begin{equation}
\label{eqn:thm-sum-T-1-b}
    \begin{aligned}
    & \sum\limits_{t=1}^T\alpha\frac{(1-\eta_1)}{2}(\epsilon  + C_g^2C_f^2)^{-1/2}\| \nabla P(\w_t)\|^2 \leq \sum\limits_{t=1}^T \E_t[\V_{t} - \V_{t+1}]\\
     &+2\eta_1 L\alpha^2T d(1-\eta_1)^{-1}(1-\eta_2)^{-1}(1-\tau)^{-1}+LT\alpha^2d(1-\eta_2)^{-1}(1-\tau)^{-1} \\
& + 2(1-\eta_1)^{-1}\alpha C_g^2C_f^2 \sum\limits_{t=1}^T\sum\limits_{i'=1}^d((\epsilon + \hat{v}_{i'}^{t})^{-1/2}-(\epsilon + \hat{v}_{i'} ^{t+1})^{-1/2})  +c\alpha\sum\limits_{t=1}^T \E_t[  \|g_{i_t}(\w_t) - \u_{i_t}\|^2] \\
& \leq \E[\V_1] -\E[\V_{T+1}]\\
     &+2\eta_1 L\alpha^2T d(1-\eta_1)^{-1}(1-\eta_2)^{-1}(1-\tau)^{-1}+LT\alpha^2d(1-\eta_2)^{-1}(1-\tau)^{-1} \\
& + 2(1-\eta_1)^{-1}\alpha C_g^2C_f^2 
\sum\limits_{i'=1}^d((\epsilon + \hat{v}^{i'}_{0})^{-1/2})  +c\alpha\sum\limits_{t=1}^T \E_t[  \|g_{i_t}(\w_t) - \u_{i_t}\|^2]
    \end{aligned}
\end{equation}

By combing with Lemma~\ref{lem:cumulative_var},
We finish the proof.
\end{proof}

\end{document}